\newcommand{\removelatexerror}{\let\@latex@error\@gobble}
\newtheorem{theorem}{Theorem}
\newtheorem{corollary}{Corollary}[theorem]\makeatletter
\newtheorem{proposition}[theorem]{Proposition}
\begin{document}


\title{clDice - a Novel Topology-Preserving Loss Function for Tubular Structure Segmentation}

\author{Suprosanna Shit \thanks{The authors contributed equally to the work} ~$^{1}$ ~~~~
Johannes C. Paetzold $^\ast$~$^{1}$~~~~
Anjany Sekuboyina$^{1}$~~~~
Ivan Ezhov$^{1}$~~~~\\
Alexander Unger$^{1}$~~~~
Andrey Zhylka$^{2}$~~~~
Josien P. W. Pluim$^{2}$~~~~
Ulrich Bauer$^{1}$~~~~
Bjoern H. Menze$^{1}$~~~~\\
\hfil$^{1}$Technical University of Munich~~~~~$^{2}$ Eindhoven University of Technology\vspace{0.1cm}
}

\maketitle
\begin{abstract} Accurate segmentation of tubular, network-like structures, such as vessels, neurons, or roads, is relevant to many fields of research. For such structures, the topology is their most important characteristic; particularly preserving connectedness: in the case of vascular networks, missing a connected vessel entirely alters the blood-flow dynamics. We introduce a novel similarity measure termed centerlineDice (short \textit{clDice}), which is calculated on the intersection of the segmentation masks and their (morphological) skeleta. We theoretically prove that \textit{clDice} guarantees topology preservation up to homotopy equivalence for binary 2D and 3D segmentation. Extending this, we propose a computationally efficient, differentiable loss function (\textit{soft-clDice}) for training arbitrary neural segmentation networks. We benchmark the \textit{soft-clDice} loss on five public datasets, including vessels, roads and neurons (2D and 3D). Training on \textit{soft-clDice} leads to segmentation with more accurate connectivity information, higher graph similarity, and better volumetric scores.
\end{abstract}

\section{Introduction}
\begin{figure}[ht!]
\label{figmot}
\begin{center}
\includegraphics[width=0.44\textwidth]{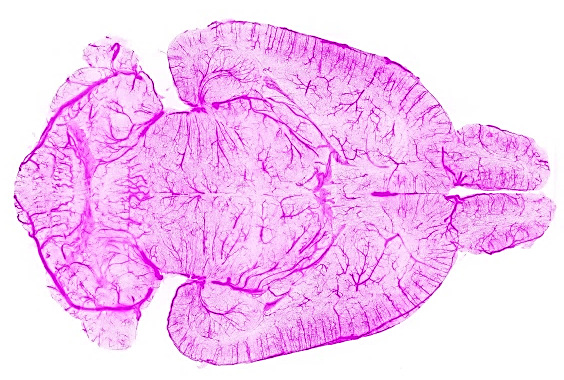}
\includegraphics[width=0.15\textwidth]{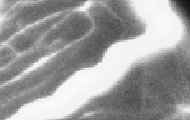}
\includegraphics[width=0.15\textwidth]{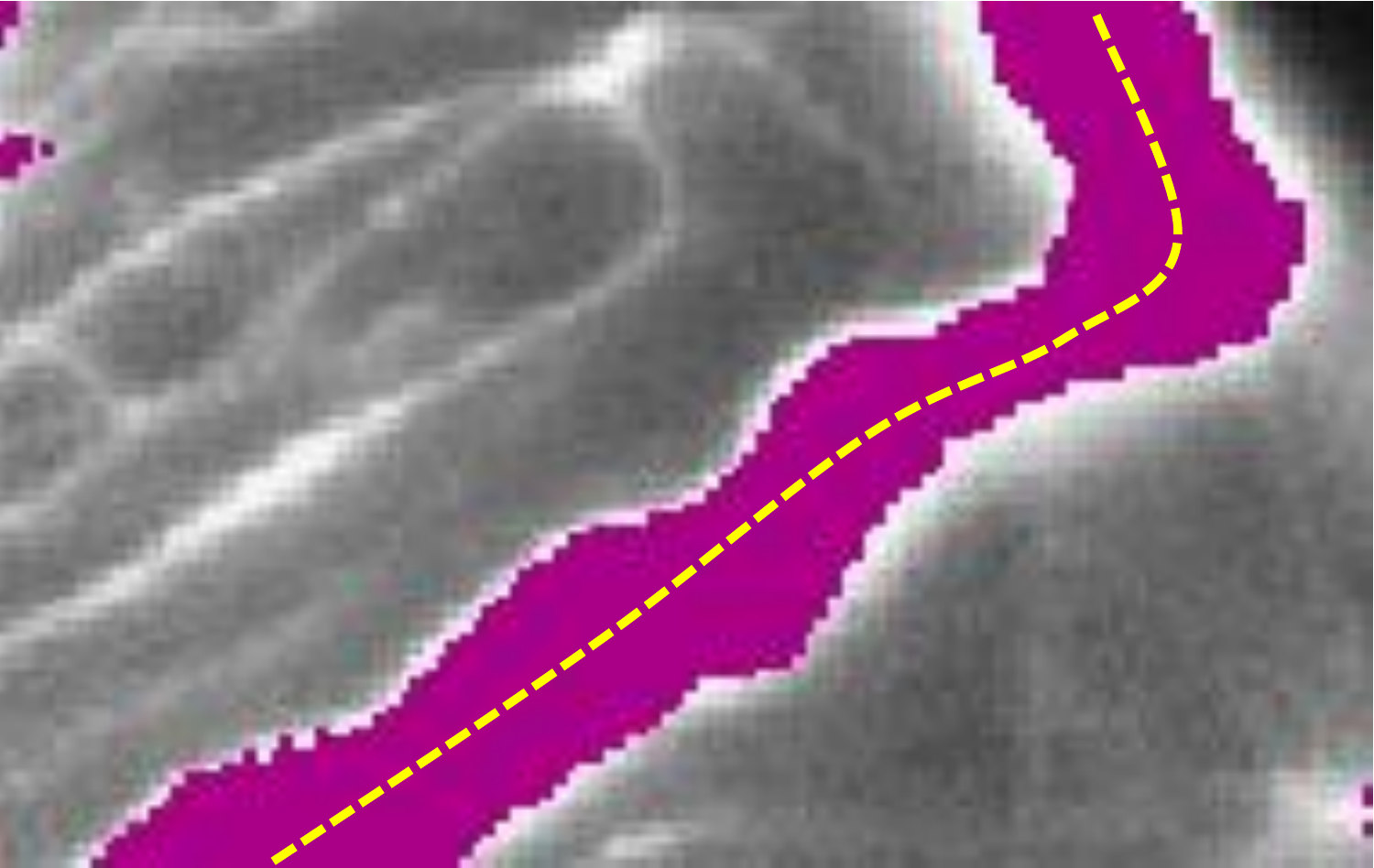}
\includegraphics[width=0.15\textwidth]{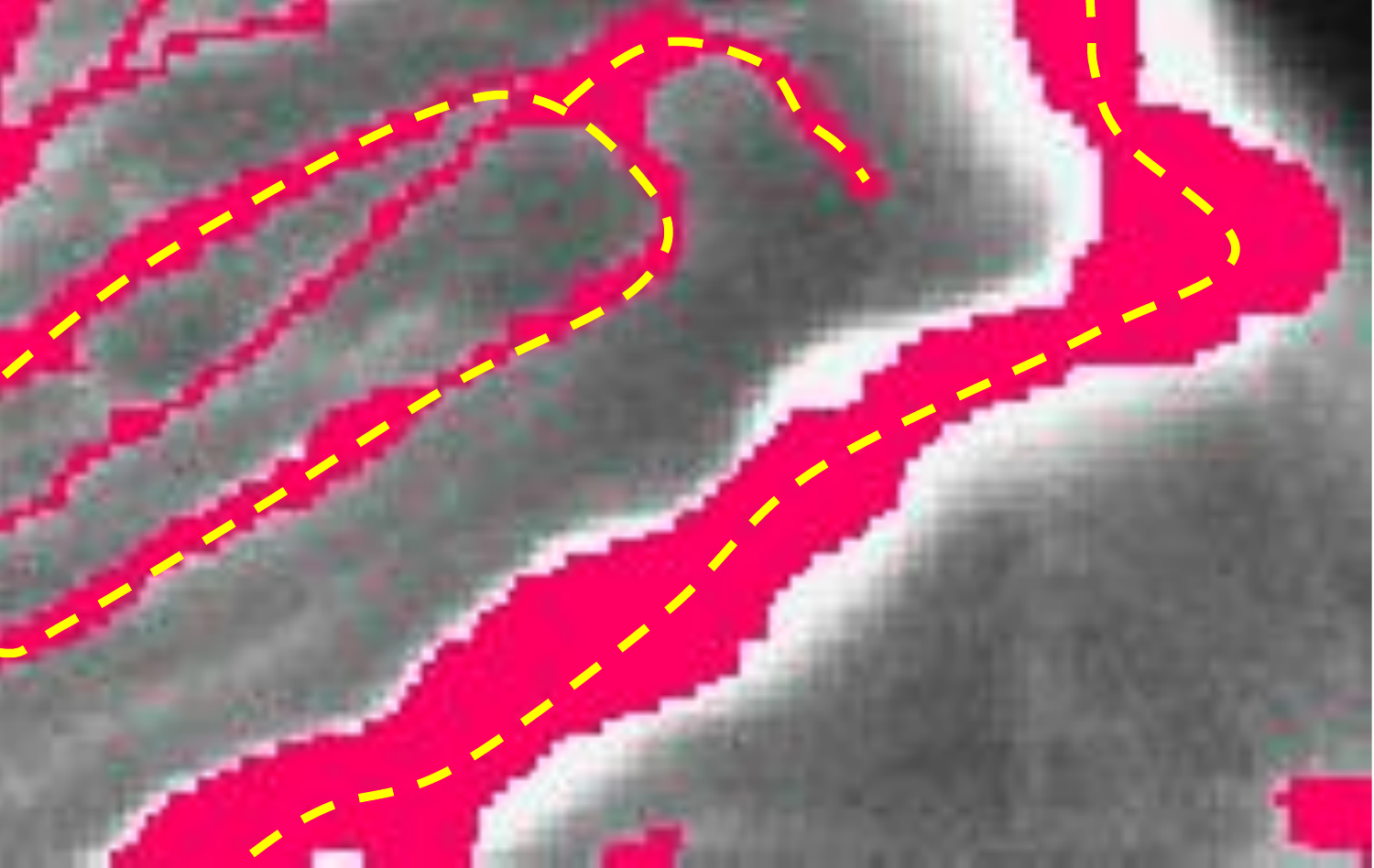}
\end{center}
\caption{\textbf{Motivation:} The figure shows a 3D rendering of a complex, whole brain vascular dataset \cite{todorov2019automated}, where an exemplary 2D slice of the data is chosen and segmented by two different models, see purple (middle) and red (right), respectively. The two segmentation results achieve identical quality in terms of the traditional Dice score. Note that the purple segmentation does not capture the small vessels while segmenting the large vessel very accurately; on the other side, the red segmentation captures all vessels in the image while being less accurate on the radius of the large vessel. Skeleta are drawn in yellow. From a topology or network perspective, the red segmentation is evidently preferred.}
\vspace{-1.5em}
\end{figure}
Segmentation of \textit{tubular} and \textit{curvilinear} structures is an essential problem in numerous domains, such as clinical and biological applications (blood vessel and neuron segmentation from microscopic, optoacoustic, or radiology images), remote sensing applications (road network segmentation from satellite images) and industrial quality control, etc. In the aforementioned domains, a topologically accurate segmentation is necessary to guarantee error-free down-stream tasks, such as computational hemodynamics, route planning, Alzheimer's disease prediction \cite{hunter2012morphological}, or stroke modeling \cite{joutel2010cerebrovascular}. When optimizing computational algorithms for segmenting curvilinear structures, the two most commonly used categories of quantitative performance measures for evaluating segmentation accuracy of \textit{tubular} structures, are 1) overlap based measures such as Dice, precision, recall, and Jaccard index; and 2) volumetric distance measures such as the Hausdorff and Mahalanobis distance \cite{kirbas2004review,schneider2015joint,phellan2017vascular,hu2018retinal}.

However, in most segmentation problems, where the object of interest is 1) locally a \textit{tubular} structure and 2) globally forms a \textit{network}, the most important characteristic is the connectivity of the global network topology. Note that \textit{network} in this context implies a physically connected structure, such as a vessel network, a road network, etc., which is also the primary structure of interest for the given image data. As an example, one can refer to brain vasculature analysis, where a missed vessel segment in the segmentation mask can pathologically be interpreted as a stroke or may lead to dramatic changes in a global simulation of blood flow. On the other hand, limited over- or under-segmentation of vessel radius can be tolerated, because it does not affect clinical diagnosis.

For evaluating segmentations in such tubular-network structures, traditional volume-based performance indices are sub-optimal. For example, Dice and Jaccard rely on the average voxel-wise hit or miss prediction \cite{taha2015metrics}. In a task like network-topology extraction, a spatially contiguous sequence of correct voxel prediction is more meaningful than a spurious correct prediction. This ambiguity is relevant for objects of interest, which are of the same thickness as the resolution of the signal. For them, it is evident that a single-voxel shift in the prediction can change the topology of the whole network. Further, a globally averaged metric does not equally weight tubular-structures with large, medium, and small radii (cf. Fig~\ref{figmot}). In real vessel datasets, where vessels of wide radius ranges exist, e.g. 30 $\mu$m for arterioles and 5 $\mu$m for capillaries \cite{todorov2019automated,di2018whole}, training on a globally averaged loss induces a strong bias towards the volumetric segmentation of large vessels. Both scenarios are pronounced in imaging modalities, such as fluorescence microscopy \cite{todorov2019automated,zhao2020cellular} and optoacoustics, which focus on mapping small capillary structures. 

To this end, we are interested in a topology-aware image segmentation, eventually enabling a correct network extraction. Therefore, we ask the following research questions: 
\begin{enumerate}
    \item[Q1.] What is a good pixelwise measure to benchmark segmentation algorithms for \textbf{tubular}, and related linear and curvilinear structure segmentation while guaranteeing the preservation of the \textbf{network-topology}? 
    \item[Q2.] Can we use this \textit{improved measure} as a loss function for neural networks, which is a void in existing literature?
\end{enumerate}
\subsection{Related Literature}
Achieving topology preservation can be crucial to obtain meaningful segmentation, particularly for elongated and connected shapes, e.g. vascular structures or roads. However, analyzing preservation of topology while simplifying geometries is a difficult analytical and computational problem \cite{edelsbrunner2010computational,edelsbrunner2000topological}.

For binary geometries, various algorithms based on thinning and medial surfaces have been proven to be topology-preserving according to varying definitions of topology \cite{kong1995topology,lee1994building,ma1994topology,palagyi20023}. For non-binary geometries, existing methods applied topology and connectivity constraints onto variational and Markov random field-based methods: tree shape priors for vessel segmentation \cite{stuhmer2013tree}, graph representation priors to natural images \cite{andres2011probabilistic}, higher-order cliques which connect superpixels \cite{wegner2013higher} and adversarial learning for road segmentation \cite{vasu2020topoal}, integer programming to general curvilinear structures \cite{turetken2016reconstructing}, and proposed a tree-structured convolutional gated recurrent unit \cite{kong2020learning}, morphological optimization \cite{gur2019unsupervised}, among others \cite{araujo2019deep,han2003topology,nowozin2009global,navarro2019shape,oswald2014generalized,rempfler2017efficient,segonne2008active,vicente2008graph,zeng2008topology,wu2016deep}. Further, topological priors of containment were applied to histology scans \cite{bentaieb2016topology}, a 3D CNN with graph refinement was used to improve airway connectivity \cite{jin20173d}, and recently, Mosinska et al. trained networks which perform segmentation and path classification simultaneously \cite{mosinska2019joint}. Another approach enables the predefinition of Betti numbers and enforces them on the training\cite{clough2020topological}.

The aforementioned literature has advanced the communities understanding of topology-preservation, but critically, they do not possess end-to-end loss functions that optimize topology-preservation. 
In this context, the literature remains sparse. Recently, 
Mosinska et al. suggested that pixel-wise loss-functions are unsuitable and used selected filter responses from a VGG19 network as an additional penalty \cite{mosinska2018beyond}. Nonetheless, their approach does not prove topology preservation. Importantly, Hu et al. proposed the first continuous-valued loss function based on the Betti number and persistent homology \cite{hu2019topology}. However, this method is based on matching critical points, which, according to the authors makes the training very expensive and error-prone for real image-sized patches \cite{hu2019topology}. While this is already limiting for a translation to large real world data set, we find that none of these approaches have been extended to three dimensional (3D) data.

\subsection{Our Contributions}
The objective of this paper is to identify an efficient, general, and intuitive loss function that enables topology preservation while segmenting tubular objects. 
We introduce a novel connectivity-aware similarity measure named \textit{clDice} for benchmarking tubular-segmentation algorithms. 
Importantly, we provide theoretical guarantees for the topological correctness of the \textit{clDice} for binary 2D and 3D segmentation. As a consequence of its formulation based on morphological skeletons, our measure pronounces the network's topology instead of equally weighting every voxel.
Using a differentiable soft-skeletonization, we show that the \textit{clDice} measure can be used to train neural networks. 
We show experimental results for various 2D and 3D network segmentation tasks to demonstrate the practical applicability of our proposed similarity measure and loss function.

\section{Let's Emphasize \emph{Connectivity}}
We propose a novel connectivity-preserving metric to evaluate tubular and linear structure segmentation based on intersecting skeletons with masks. We call this metric \textit{centerlineDice} or \textbf{\textit{clDice}}.
\begin{figure*}[]
\begin{center}
\includegraphics[width=0.85\textwidth]{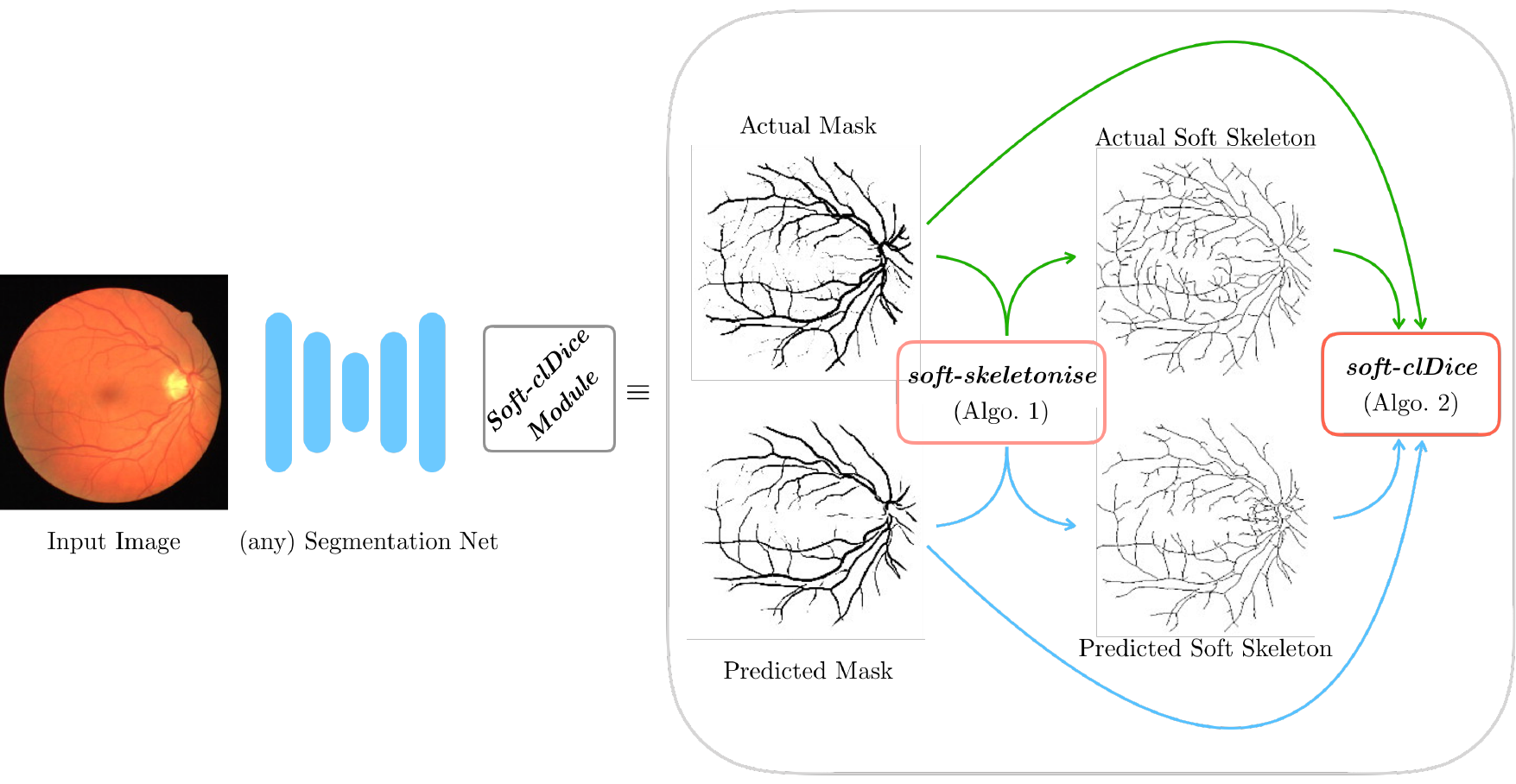}
\end{center}
\label{met}
\caption{ \textbf{Schematic overview of our proposed method:} Our proposed \textit{clDice} loss can be applied to any arbitrary segmentation network. The soft-skeletonization can be easily implemented using pooling functions from any standard deep-learning toolbox.}
\vspace{-1em}
\end{figure*}

We consider two binary masks: the ground truth mask ($V_L$) and the predicted segmentation masks ($V_P$). First, the skeletons $ S_P$ and $ S_L$ are extracted from $V_P$ and $V_L$ respectively. Subsequently, we compute the fraction of $S_P$ that lies within $V_L$, which we call \textit{Topology Precision} or $\operatorname{Tprec}(S_P, V_L)$, and vice-a-versa we obtain \textit{Topology Sensitivity} or $\operatorname{Tsens}(S_L, V_P)$ as defined bellow;
\begin{align}
\operatorname{Tprec}(S_P, V_L) = \frac{|S_P  \cap V_L|}{|S_P|};~~
\operatorname{Tsens}(S_L, V_P) = \frac{|S_L  \cap V_P|}{|S_L|}\label{top_def}
\end{align}
We observe that the measure $\operatorname{Tprec}(S_P, V_L)$ is susceptible to false positives in the prediction while the measure $\operatorname{Tsens}(S_L, V_P)$ is susceptible to false negatives. This explains our rationale behind referring to the $\operatorname{Tprec}(S_P, V_L)$ as topology's precision and to the $\operatorname{Tsens}(S_L, V_P)$ as its sensitivity. Since we want to maximize both precision and sensitivity (recall), we define \textit{clDice} to be the harmonic mean (also known as F1 or Dice) of both the measures:
\begin{align}
\operatorname{clDice}(V_P, V_L) & = 2 \times \dfrac{ \operatorname{Tprec}(S_P, V_L) \times \operatorname{Tsens}(S_L, V_P)}{\operatorname{Tprec}(S_P, V_L) + \operatorname{Tsens}(S_L, V_P)}\label{eq2}
\end{align}
Note that our \textit{clDice} formulation is not defined for $\operatorname{Tprec} = 0 \mbox{ and } \operatorname{Tsens} = 0$, but can easily be extended continuously with the value $0$.
\section{Topological Guarantees for clDice}
\label{sec:proof}

The following section provides general theoretical guarantees for the preservation of topological properties achieved by optimizing \textit{clDice} under mild conditions on the input.
Roughly, these conditions state that the object of interest is embedded in $S^3$ in a non-knotted way, as is typically the case for blood vessel and road structures.

Specifically, we assume that both ground truth and prediction \emph{admit foreground and background skeleta}, which means that both foreground and background are homotopy-equivalent to topological graphs, which we assume to be embedded as \emph{skeleta}.
Here, the voxel grid is considered as a cubical complex, consisting of elementary cubes of dimensions 0, 1, 2, and 3.
This is a special case of a \emph{cell complex} (specifically, a \emph{CW complex}), which is a space constructed inductively, starting with isolated points ($0$-cells), and gluing a collection of topological balls of dimension $k$ (called \emph{$k$-cells}) along their boundary spheres to a $k-1$-dimensional complex.
The voxel grid, seen as a cell complex in this sense,
can be completed to an ambient complex that is homeomorphic to the 3-sphere $S^3$ by attaching a single exterior cell to the boundary.
In order to consider foreground and background of a binary image as complementary subspaces,
the foreground is now assumed to be the union of closed unit cubes in the voxel grid, corresponding to voxels with value $1$; and the background is the complement in the ambient complex.
This convention is commonly used in digital topology \cite{kong1989digital,kong1995topology}.
The assumption on the background can then be replaced by a convenient equivalent condition, stating that the foreground is also homotopy equivalent to a subcomplex obtained from the ambient complex by only removing 3-cells and 2-cells.
Such a subcomplex is then clearly homotopy-equivalent to the complement of a 1-complex.\\



We will now observe that the above assumptions imply that the foreground and the background are connected and have a free fundamental group and vanishing higher fundamental groups.
In particular, the homotopy type is already determined by the first Betti number \footnote{Betti numbers:
	$\beta_0$ represents the number of distinct \textit{connected-components},
	$\beta_1$ represents the number of \textit{circular holes}
	, and
	$\beta_2$ represents the number of \textit{cavities}, for depictions see Supplementary material }; moreover, a map inducing an isomorphism in homology is already a homotopy equivalence.
To see this, first note that both foreground and background are assumed to have the homology of a graph, in particular, homology is trivial in degree 2.
By Alexander duality \cite{aleksandrov1998combinatorial}, then, both foreground and background have trivial reduced cohomology in degree 0, meaning that they are connected.
This implies that both have a free fundamental group (as any connected graph) and vanishing higher homotopy groups.
In particular, since homology in degree 1 is the Abelianization of the fundamental group, these two groups are isomorphic.
This in turn implies that in our setting a map that induces isomorphisms in homology already induces isomorphisms between all homotopy groups.
By Whitehead's theorem \cite{whitehead1949combinatorial}, such a map is then a homotopy equivalence.\\

The following theorem shows that under our assumptions on the images admitting foreground and background skeleta, the existence of certain nested inclusions already implies the homotopy-equivalence of foreground and background, which we refer to as \emph{topology preservation}.

\begin{theorem}
	\label{thm2}
	Let $L_A \subseteq A \subseteq K_A$ 
	and $L_B \subseteq B \subseteq K_B$
	be connected subcomplexes of some cell complex.
	Assume that the above inclusions
	are homotopy equivalences.
	If the subcomplexes also are related by inclusions
	$L_A \subseteq B \subseteq K_A$ 
	and $L_B \subseteq A \subseteq K_B$, then these inclusions must be homotopy equivalences as well.
	In particular, $A$ and $B$ are homotopy-equivalent.
\end{theorem}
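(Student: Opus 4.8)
The plan is to exploit the fact, established just before the theorem statement, that in this setting a map inducing an isomorphism on homology in all degrees is already a homotopy equivalence (via Alexander duality, Whitehead's theorem, and the free/abelian fundamental group observation). So it suffices to track what the given inclusions do on homology, and the whole argument becomes a diagram chase with homology functors applied to the poset of inclusions. First I would record the commuting diagram of inclusions among the six subcomplexes $L_A, L_B, A, B, K_A, K_B$: we have $L_A \hookrightarrow A \hookrightarrow K_A$, $L_B \hookrightarrow B \hookrightarrow K_B$, and crucially the "cross" inclusions $L_A \hookrightarrow B \hookrightarrow K_A$ and $L_B \hookrightarrow A \hookrightarrow K_B$, all of which commute because they are literal inclusions of subsets.

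Next I would apply $H_*$ (singular homology with, say, $\mathbb{Z}$ coefficients) to this diagram. By hypothesis $H_*(L_A) \to H_*(A)$, $H_*(A) \to H_*(K_A)$, $H_*(L_B) \to H_*(B)$, and $H_*(B) \to H_*(K_B)$ are all isomorphisms. I want to conclude the four cross-maps $H_*(L_A) \to H_*(B)$, $H_*(B) \to H_*(K_A)$, $H_*(L_B) \to H_*(A)$, $H_*(A) \to H_*(K_B)$ are isomorphisms too. Consider the composite $L_A \hookrightarrow B \hookrightarrow K_A$. The composite $H_*(L_A)\to H_*(K_A)$ equals $H_*(L_A)\to H_*(A)\to H_*(K_A)$ (both are the inclusion-induced map $H_*(L_A)\to H_*(K_A)$, since the underlying set inclusions agree), hence it is an isomorphism, being a composite of the two hypothesized isomorphisms. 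So in $H_*(L_A)\xrightarrow{f} H_*(B)\xrightarrow{g} H_*(K_A)$ the composite $gf$ is an isomorphism; this immediately gives $f$ injective and $g$ surjective. Symmetrically, look at $L_B \hookrightarrow A \hookrightarrow K_B$: here $A$ plays the middle role, but I can also route through $B$: actually the cleaner symmetric partner is to use $L_A \subseteq A$ and the chain $L_A \subseteq B$, $A \subseteq K_B$; more directly, apply the same "composite is an iso" observation to every length-two chain in the diagram. Doing so yields, for the triple $L_A \subseteq B \subseteq K_A$, that $f = H_*(L_A\to B)$ is split injective and $g = H_*(B\to K_A)$ is split surjective; and for the triple $L_B \subseteq B \subseteq K_B$ (the original one) nothing new; the extra leverage comes from the triple $L_B \subseteq A \subseteq K_B$ combined with $L_B \subseteq A$ appearing also as part of $L_B \subseteq A \subseteq K_A$? — it does not, since $A \not\subseteq K_A$ is false, $A\subseteq K_A$ holds. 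So in fact both $A$ and $B$ sit between the "small" complexes $L_A,L_B$ and the "large" complexes $K_A,K_B$, and I get:
\begin{align}
H_*(L_A) \hookrightarrow H_*(B) \twoheadrightarrow H_*(K_A), \qquad
H_*(L_B) \hookrightarrow H_*(A) \twoheadrightarrow H_*(K_B).
\end{align}
To upgrade injectivity/surjectivity to isomorphism I then feed $B$ back into the $A$-chain and vice versa: $L_A \subseteq B$ followed by $B \subseteq K_B$ — wait, $B\subseteq K_B$ holds but $L_A\subseteq B$ gives $H_*(L_A)\to H_*(K_B)$ which I don't control. The honest route is: since $H_*(L_A)\to H_*(A)$ and $H_*(L_B)\to H_*(B)$ are isos, and the cross inclusions let me compare, the four complexes $L_A, L_B, A, B$ all have isomorphic homology once I also know $H_*(L_A)\cong H_*(L_B)$; and that follows because $L_A \subseteq B$ and $L_B \subseteq A$ and $A\simeq L_A$, $B \simeq L_B$ force (via the injections) $H_*(L_A)\hookrightarrow H_*(B)\cong H_*(L_B)$ and $H_*(L_B)\hookrightarrow H_*(A)\cong H_*(L_A)$, so we get mutually inverse-up-to-finite-rank injections between finitely generated groups — and since these are finitely generated abelian groups, mutual injections force isomorphism on free ranks and on each torsion summand, hence $H_*(L_A)\cong H_*(L_B)$, which then forces $f$ and $g$ above to be isomorphisms by counting. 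Then $H_*(B\to K_A)$ is an iso, and composed with the inverse of $H_*(A\to K_A)$ gives that the inclusion-induced map $H_*(B)\to H_*(A)$ (through the common overcomplex) — more precisely, since both $A\hookrightarrow K_A$ and $B\hookrightarrow K_A$ induce isomorphisms on homology and $A,B$ are connected subcomplexes of $K_A$, the inclusions $A\hookrightarrow K_A\hookleftarrow B$ are homology isomorphisms, hence by the Whitehead-type statement homotopy equivalences, so $A\simeq K_A\simeq B$.

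The main obstacle I anticipate is the "mutual injection implies isomorphism" step: it is clean for finitely generated abelian groups (which is the relevant case, since all complexes here are finite), but one must be slightly careful to argue it degree by degree and to note that a composite of the two injections $H_*(L_A)\hookrightarrow H_*(L_B)\hookrightarrow H_*(L_A)$ need not be the identity, so one genuinely invokes that an injective endomorphism of a finitely generated abelian group with the same... no — the composite $H_*(L_A)\to H_*(L_A)$ is injective, and an injective endomorphism of a finitely generated abelian group is an isomorphism iff it is surjective, which is not automatic (multiplication by 2 on $\mathbb{Z}$). The correct fix: use that $\operatorname{rank}$ and the cardinality of each torsion part are monotone under injections, so mutual injections force equality of these invariants in each degree, hence $H_*(L_A)\cong H_*(L_B)$ abstractly; combined with the commuting triangle $H_*(L_A)\hookrightarrow H_*(B)\twoheadrightarrow H_*(K_A)$ where now source and a known iso $H_*(A)\cong H_*(K_A)$ pin down the target's invariants, a monomorphism between finitely generated abelian groups with equal rank and equal torsion-invariants in each degree that is also split (it is, by the retraction coming from the surjection) is an isomorphism. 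Once all homology maps in the diagram are isomorphisms, the pre-theorem discussion applies verbatim: connectedness plus free $\pi_1$ plus vanishing higher $\pi_n$ make homology isomorphisms into homotopy equivalences by Whitehead's theorem, giving $A \simeq K_A \simeq K_B \simeq B$ and in particular all the cross-inclusions are homotopy equivalences. I would present this last part briefly since it merely cites the already-established machinery. I should double-check the bookkeeping that every square/triangle I use genuinely commutes — it does, since every arrow is induced by an honest inclusion of subsets of the same ambient complex, so any two parallel composites agree on the nose before applying $H_*$.
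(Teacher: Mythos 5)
Your overall strategy --- reading off from the two factorizations $L_A\subseteq B\subseteq K_A$ and $L_B\subseteq A\subseteq K_B$ that the induced cross-maps are split injective resp.\ split surjective, and then upgrading to isomorphisms --- contains the key combinatorial observation, but it diverges from the paper's proof in two ways that matter. First, the paper proves the theorem as a purely formal statement about \emph{homotopy groups}: the composite $L_A\subseteq B\subseteq K_A$ induces an isomorphism (it is literally the same map as the composite through $A$), so $\pi_*(L_A)\to\pi_*(B)$ is left-invertible, hence so is $\pi_*(L_A)\to\pi_*(K_B)$ after composing with the isomorphism $\pi_*(B)\to\pi_*(K_B)$; dually, the factorization $L_B\subseteq A\subseteq K_B$ makes $\pi_*(A)\to\pi_*(K_B)$ right-invertible, hence so is $\pi_*(L_A)\to\pi_*(K_B)$ after precomposing with the isomorphism $\pi_*(L_A)\to\pi_*(A)$. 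A map with both a left and a right inverse is an isomorphism, and the cross-maps then follow by cancelling against the hypothesized isomorphisms. That argument needs no finiteness, no abelianness, and no asphericity, and Whitehead's theorem applies directly. Your route instead passes through homology and invokes the pre-theorem discussion (Alexander duality, free $\pi_1$, vanishing higher homotopy) to convert homology isomorphisms back into homotopy equivalences; but those facts are standing assumptions for the \emph{corollary}, not hypotheses of the theorem, so as written you prove only a special case of the stated result.

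Second, even granting the homological setting, your finish is heavier than necessary and leans on finite generation. The ``mutual injections of finitely generated abelian groups force abstract isomorphism'' step can be made to work (rank and the cardinality of the torsion subgroup are monotone under monomorphisms, and a split monomorphism between isomorphic f.g.\ groups is an isomorphism by cancellation), but it is exactly the detour the left-inverse/right-inverse trick avoids: from the two facts you already established --- $H_*(L_A)\to H_*(B)$ split injective and $H_*(A)\to H_*(K_B)$ split surjective --- you immediately get that $H_*(L_A)\to H_*(K_B)$ is both left- and right-invertible, hence an isomorphism, with no counting of ranks or torsion. I would rewrite the argument at the level of homotopy groups along these lines; as it stands your proof is valid only under the additional hypotheses of the surrounding section and carries an unnecessary dependence on the homology groups being finitely generated.
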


\begin{proof}
	An inclusion of connected cell complexes is a homotopy equivalence if and only if it induces isomorphisms on all homotopy groups.
	Since the inclusion $L_A \subseteq B \subseteq K_A$ induces an isomorphism, the inclusion $L_A \subseteq B$ induces a monomorphism, and since $B \subseteq K_B$ induces an isomorphism, the inclusion $L_A \subseteq K_B$ also induces a monomorphism.
	At the same time, since the inclusion $L_B \subseteq A \subseteq K_B$ induces an isomorphism, the inclusion $A \subseteq K_B$ induces an epiorphism, and since $L_A \subseteq A$ induces an isomorphism, the inclusion $L_A \subseteq K_B$ also induces an epiorphism.
	Together, this implies that the inclusion $L_A \subseteq K_B$ induces an isomorphism.
	
	Together with the isomorphisms induced by $L_A \subseteq A$ and $B \subseteq K_B$, we obtain isomorphisms induced by $L_A \subseteq B$ and by $A \subseteq K_B$, which compose to an isomorphism
	between the homotopy groups of $A$ and $B$.
\end{proof}

\begin{corollary}
	Let $V_L$ and $V_P$ be two binary masks admitting foreground and background skeleta, such that the foreground skeleton of $V_L$ is included in the foreground of $V_P$ and vice versa, and similarly for the background.
	Then the foregrounds of $V_L$ and $V_P$ are homotopy equivalent, and the same is true for their backgrounds.
\end{corollary}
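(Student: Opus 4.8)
The plan is to read the corollary off as a direct application of Theorem~\ref{thm2}, invoked once for the pair of foregrounds and once for the pair of backgrounds; since the corollary's hypotheses are symmetric under exchanging foreground and background, I would write out only the foreground case. Write $X \cong S^3$ for the ambient cell complex, let $F_L,F_P \subseteq X$ be the (closed) foregrounds of $V_L$ and $V_P$, let $G_L = X \setminus F_L$ and $G_P = X \setminus F_P$ be their (open) backgrounds, and let $S^{+}_L, S^{+}_P$ and $S^{-}_L, S^{-}_P$ be the foreground and background skeleta. I would instantiate Theorem~\ref{thm2} with $A := F_L$, $B := F_P$, $L_A := S^{+}_L$, $L_B := S^{+}_P$, and with $K_A$ (resp.\ $K_B$) a subcomplex of a sufficiently fine common subdivision of $X$ that contains both $F_L$ and $F_P$ and is a deformation retract of $X \setminus S^{-}_L$ (resp.\ of $X \setminus S^{-}_P$) --- for instance the complement of a regular neighborhood of $S^{-}_L$, refined if needed; it contains $F_L$ and $F_P$ because both are closed and disjoint from $S^{-}_L$, using $S^{-}_L \subseteq G_L$ and (by hypothesis) $S^{-}_L \subseteq G_P$. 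After this subdivision $A, B, L_A, L_B, K_A, K_B$ are all connected subcomplexes of one cell complex: connectedness of the foregrounds and backgrounds is the Alexander-duality observation recalled above, and the skeleta are connected because they are homotopy equivalent to the respective foregrounds.

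Next I would verify the hypotheses of Theorem~\ref{thm2}. The inclusions $L_A \subseteq A$ and $L_B \subseteq B$ are homotopy equivalences by the assumption that $V_L$ and $V_P$ admit foreground skeleta. The inclusions $A \subseteq K_A$ and $B \subseteq K_B$ are homotopy equivalences by the assumption that $V_L$ and $V_P$ admit background skeleta: this is exactly the equivalent reformulation recalled above, which says that $F_L \hookrightarrow X \setminus S^{-}_L$ is a homotopy equivalence, combined with the two-out-of-three property applied to the factorization $F_L \subseteq K_A \subseteq X \setminus S^{-}_L$ in which the second inclusion is a homotopy equivalence by construction.

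For the crossing inclusions demanded by the theorem: $L_A = S^{+}_L \subseteq F_P = B$ is precisely the hypothesis that the foreground skeleton of $V_L$ lies in the foreground of $V_P$; and $B = F_P \subseteq K_A$ holds because $F_P$ is closed and disjoint from $S^{-}_L$ (as $S^{-}_L \subseteq G_P = X \setminus F_P$), so all of its closed cells lie in $K_A$. The two inclusions $L_B \subseteq A \subseteq K_B$ follow symmetrically from the remaining two hypotheses of the corollary. Theorem~\ref{thm2} then gives that $A = F_L$ and $B = F_P$ are homotopy equivalent, which is the assertion for the foregrounds. Repeating the whole argument with the roles of foreground and background interchanged --- legitimate because the corollary's hypotheses are symmetric under this swap and because admitting a foreground skeleton dually means that $G_L \hookrightarrow X \setminus S^{+}_L$ is a homotopy equivalence --- yields that $G_L$ and $G_P$ are homotopy equivalent.

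I do not expect a deep obstacle here, since all the topological content is already carried by Theorem~\ref{thm2} and the preparatory remarks. The one step that needs care is the point-set bookkeeping: realizing the open complements $X \setminus S^{-}_L$, $X \setminus S^{-}_P$ (and, in the dual run, $X \setminus S^{+}_L$, $X \setminus S^{+}_P$) as honest CW subcomplexes of a single cell complex while ensuring they still contain both foregrounds, and keeping the closed-foreground/open-background convention straight so that the crossing inclusions hold literally rather than merely up to homotopy. Once that is set up, the conclusion is immediate from the theorem.
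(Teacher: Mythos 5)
Your argument is correct and is exactly the instantiation of Theorem~\ref{thm2} that the paper intends (the paper leaves the corollary's proof implicit): $A,B$ the two foregrounds, $L_A,L_B$ their skeleta, and $K_A,K_B$ complements of (neighborhoods of) the background skeleta, with the crossing inclusions supplied by the corollary's hypotheses, then the symmetric run for the backgrounds. Your extra care in realizing $X\setminus S^{-}_L$ as a subcomplex containing both foregrounds via subdivision is a legitimate tightening of a point the paper glosses over, but it does not change the route.
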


Note that the inclusion condition in this corollary is satisfied if and only if \textit{clDice} evaluates to $1$ on both foreground and background of $(V_L,V_P)$.

This proof lays the ground for a general interpretation of \textit{clDice} as a topology preserving metric. Additionally, we provide an elaborate explanation of \textit{clDice} topological properties, using concepts of applied digital topology in the theory section of the Supplementary material \cite{kong1989digital,kong1995topology}.
\section{Training Neural Networks with \textbf{\textit{clDice}}}
In the previous section we provided general theoretic guarantees how \textit{clDice} has topology preserving properties. The following chapter shows how we applied our theory to efficiently train topology preserving networks using the \textit{clDice} formulation. \footnote{\url{https://github.com/jocpae/clDice}}

\begin{figure*}[]
\begin{center}
\includegraphics[width=0.9\linewidth]{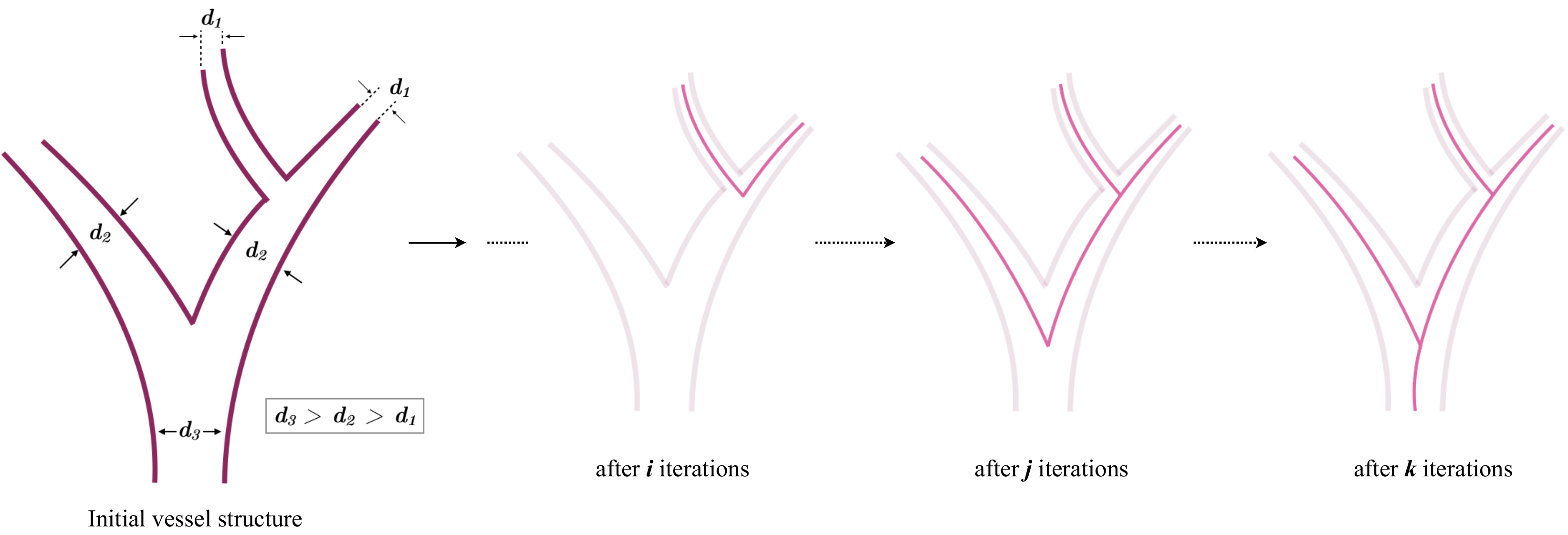}
\end{center}
\caption{ Based on the initial vessel structure (purple), sequential bagging of skeleton voxels (red) via iterative skeletonization leads to a complete skeletonization, where $d$ denotes the diameter and  $k>j>i$ iterations.}
\label{seq_skel}
\vspace{-1em}
\end{figure*}

\subsection{\textbf{\textit{Soft-clDice}} using \textbf{\textit{Soft-skeletonization}}:}
Extracting accurate skeletons is essential to our method. For this task, a multitude of approaches has been proposed. However, most of them are not fully differentiable and therefore unsuited to be used in a loss function. Popular approaches use the Euclidean distance transform or utilize repeated morphological thinning. Euclidean distance transform has been used on multiple occasions \cite{shih1995skeletonization,wright1995skeletonization}, but remains a discrete operation and, to the best of our knowledge, an end-to-end differentiable approximation remains to be developed, preventing the use in a loss function for training neural networks.
On the contrary, morphological thinning is a sequence of dilation and erosion operations [c.f. Fig. \ref{seq_skel}].

\begin{figure}[t!]
\begin{minipage}{0.45\textwidth}
\removelatexerror
    \begin{algorithm*}[H]
        \caption{\textit{soft-skeleton}}
        \label{algorithm-1}
        \SetKwInOut{Input}{Input}
        \Input{$I,k$}
        \begin{algorithmic}
        \State $I' \gets \textit{maxpool}(\textit{minpool}(I))$
        \State $S \gets \textit{ReLU}(I-I')$
        \end{algorithmic}
        \For{$i\gets0$ \KwTo $k$}{
            $I \gets  \textit{minpool}(I)$\\
            $I' \gets \textit{maxpool}(\textit{minpool}(I))$\\
            $S \gets S+(1-S)\circ \textit{ReLU}(I-I')$
        }
        \SetKwInOut{Output}{Output}
        \Output{$S$}
    \end{algorithm*}
    \begin{algorithm*}[H]
        \caption{\textit{soft-clDice}}
        \label{algorithm-2}
        \SetKwInOut{Input}{Input}
        \Input{$V_P,V_L$}
        \begin{algorithmic}
            \State $S_P \gets \mbox{\textit{soft-skeleton}}(V_P)$
            \State $S_L \gets \mbox{\textit{soft-skeleton}}(V_L)$
            \State $\textit{Tprec}(S_P, V_L) \gets \frac{|S_P \circ V_L|+\epsilon}{|S_P|+\epsilon}$
            \State $\textit{Tsens}(S_L, V_P) \gets \frac{|S_L \circ V_P|+\epsilon}{|S_L|+\epsilon}$
            \State $\textit{clDice} \gets$ \par
            $2 \times \frac{ \textit{Tprec}(S_P, V_L)\times \textit{Tsens}(S_L, V_P)}{\textit{Tprec}(S_P, V_L)+ \textit{Tsens}(S_L, V_P)}$
        \end{algorithmic}
        \vspace{0.2cm}
        \SetKwInOut{Output}{Output}
        \Output{$clDice$}
    \end{algorithm*}
\end{minipage}    
 \caption{ \textbf{Algorithm \ref{algorithm-1}} calculates the proposed \textit{soft-skeleton}, here $I$ is the mask to be \textit{soft-skeletonized} and $k$ is the number of iterations for skeletonization. \textbf{Algorithm \ref{algorithm-2}}, calculates the \textit{soft-clDice} loss, where $V_P$ is a real-valued probabilistic prediction from a segmentation network and $V_L$ is the true mask. We denote Hadamard product using $\circ$.}
 \vspace{-1.5em}
\end{figure}
Importantly, thinning using morphological operations (skeletonization) on curvilinear structures can be topology-preserving \cite{palagyi20023}. Min- and max filters are commonly used as the grey-scale alternative of morphological dilation and erosion. Motivated by this, we propose `soft-skeletonization', where an iterative min- and max-pooling is applied as a proxy for morphological erosion and dilation. The Algorithm \ref{algorithm-1} describes the iterative processes involved in its computation. The hyper-parameter $k$ involved in its computation represents the iterations and has to be greater than or equal to the maximum observed radius. In our experiments, this parameter depends on the dataset. For example, it is  $k=5...25$ in our experiments, matching the pixel radius of the largest observed tubular structures. Choosing a larger $k$ does not reduce performance but increases computation time. On the other hand, a too low $k$ leads to incomplete skeletonization. 


In Figure \ref{seq_skel}, the successive steps of our skeletonization are intuitively represented. In the early iterations, the structures with a small radius are skeletonized and preserved until the later iterations when the thicker structures become skeletonized. This enables the extraction of a parameter-free, morphologically motivated soft-skeleton. The aforementioned soft-skeletonization enables us to use \textit{clDice} as a fully differentiable, real-valued, optimizable measure. The Algorithm \ref{algorithm-2} describes its implementation. We refer to this as the \textit{soft-clDice}.

For a single connected foreground component and in the absence of knots, the homotopy type is specified by the number of linked loops. Hence, if the reference and the predicted volumes are not homotopy equivalent, they do not have pairwise linked loops. To include these missing loops or exclude the extra loops, one has to add or discard deformation retracted skeleta of the solid foreground. This implies adding \textit{new correctly predicted voxels}. In contrast to other volumetric losses such as Dice, cross-entropy, etc., \textit{clDice} only considers the deformation-retracted graphs of the solid foreground structure. Thus, we claim that \textit{clDice} requires the least amount of \textit{new correctly predicted voxels} to guarantee the homotopy equivalence. Along these lines, Dice or cross-entropy can only guarantee homotopy equivalence if every single voxel is segmented correctly. On the other hand, \textit{clDice} can guarantee homotopy equivalence for a broader combinations of connected-voxels. Intuitively, this is a very much desirable property as it makes \textit{clDice} robust towards outliers and noisy segmentation labels.

\subsection{Cost Function}
Since our objective here is to preserve topology while achieving accurate segmentations, and not to learn skeleta, we combine our proposed \textit{soft-clDice} with \textit{soft-Dice} in the following manner: 
\begin{equation}
\mathcal{L}_{c} = (1-\alpha)(1-\mbox{\textit{soft\textbf{Dice}})}+\alpha(1-\mbox{\textit{soft\textbf{clDice}}) }
\label{eq3}
\end{equation}
where $\alpha \in [0,0.5]$. In stark contrast to previous works, where segmentation and centerline prediction has been learned jointly as multi-task learning \cite{uslu2018multi,tetteh2018deepvesselnet}, we are not interested in learning the centerline. We are interested in learning a topology-preserving segmentation. Therefore, we restrict our experimental choice of alpha to $\alpha \in [0,0.5]$.
We test \textit{clDice} on two state-of-the-art network architectures: i) a 2D and 3D U-Net\cite{ronneberger2015u,cciccek20163d}, and ii) a 2D and 3D fully connected networks (FCN) \cite{tetteh2018deepvesselnet,gerlRSOM}. As baselines, we use the same architectures trained using \textit{soft-Dice} \cite{milletari2016v,sudre2017generalised}.

\subsection{Adaption for Highly Imbalanced Data}
Our theory  (Section \ref{sec:proof}), describes a two-class problem where \textit{clDice} should be computed on both the foreground and the background channels. In our experiments, we show that for complex and highly imbalanced dataset it is sufficient to calculate the \textbf{clDice} loss on the underrepresented foreground class. We attribute this to the distinct properties of tubularness, sparsity of foreground and the lack of cavities (Betti number 2) in our data. An intuitive interpretation how these assumptions are valid in terms of digital topology can be found in the supplementary material.


\section{Experiments}
\subsection{Datasets} 
We employ five public datasets for validating \textit{clDice} and \textit{soft-clDice} as a measure and an objective function, respectively. In 2D, we evaluate on the DRIVE retina dataset \cite{staal2004ridge}, the Massachusetts Roads dataset \cite{MnihThesis} and the CREMI neuron dataset \cite{Funke_2019}. In 3D, a synthetic vessel dataset with an added Gaussian noise term \cite{schneider2012tissue} and the Vessap dataset of multi-channel volumetric scans of brain vessels is used \cite{todorov2019automated,paetzold2019transfer}. For the Vessap dataset we train different models for one and two input channels. 
%
For all of the datasets, we perform three fold cross-validation and test on held-out, large, and highly-variant test sets. Details concerning the experimental setup can be found in the supplementary.
\subsection{Evaluation Metrics}
We compare the performance of various experimental setups using three types of metrics: volumetric, topology-based, and graph-based. 
\begin{enumerate}[itemsep=0pt,parsep=0pt]
    \item Volumetric: We compute volumetric scores such as Dice coefficient, Accuracy, and the proposed \textit{clDice}.
    \item Topology-based: We calculate the mean of absolute Betti Errors for the Betti Numbers $\beta_0$ and $\beta_1$ and the mean absolute error of Euler characteristic, $\chi = V-E+F$, where $V, E, \mbox{ and } F$ denotes number of vertices, edges, and faces.
    \item Graph-based: we extract random patch-wise graphs for the 2D/3D images. We uniformly sample fixed number of points from the graph and compute the StreetmoverDistance (SMD) \cite{belli2019image}. SMD captures a Wasserstein distance between two graphs. Additionally we compute the F1 score of junction-based metric \cite{citrarotowards}.
\end{enumerate}

\begin{table*}[!ht]
\caption{ Quantitative experimental results for the Massachusetts road dataset (Roads), the CREMI dataset, the DRIVE retina dataset and the Vessap dataset (3D). Bold numbers indicate the best performance. The performance according to the \textit{clDice} measure is highlighted in rose. For all experiments we observe that using \textit{soft-clDice} in $\mathcal{L}_{c}$ results in improved scores compared to \emph{soft-Dice}. This improvement holds for almost $\alpha > 0$; $\alpha$ can be interpreted as a dataset specific hyper-parameter.}

\centering
\label{final_table}
\footnotesize
\begin{tabular}{lll|c c>{\columncolor{red!20}} c|cc|ccc}

\hline\hline
Dataset & Network & Loss & Dice & Accuracy & \textit{clDice} & $\beta_0$ Error &  $\beta_1$ Error  & SMD \cite{belli2019image} &  $\chi_{error}$ & Opt-J F1 \cite{citrarotowards}\\
\hline\hline
\multirow{13}{*}{Roads}    & \multirow{2}{*}{FCN} &  \textit{soft-dice} & 64.84 & 95.16 & 70.79 & 1.474 & 1.408 & 0.1216 & 2.634 & 0.766\\ \cdashline{3-11}[2pt/2pt]
&   & $\mathcal{L}_{c}, \alpha = 0.1$     & 66.52 & 95.70 & 74.80 & 0.987 & 1.227 & 0.1002 & 2.625 & 0.768\\
&   & $\mathcal{L}_{c}, \alpha = 0.2$     & \textbf{67.42}& \textbf{95.80} & 76.25 & \textbf{0.920} & 1.280 & \textbf{0.0954} & 2.526 & 0.770\\
&   & $\mathcal{L}_{c}, \alpha = 0.3$     & 65.90 & 95.35 & 74.86 & 0.974 & 1.197 & 0.1003 & 2.448 & 0.775\\
&   & $\mathcal{L}_{c}, \alpha = 0.4$     & 67.18 & 95.46 & \textbf{76.92} & 0.934 & \textbf{1.092} & 0.0991 & \textbf{2.183} & \textbf{0.803}\\
&   & $\mathcal{L}_{c}, \alpha = 0.5$     & 65.77 & 95.09 & 75.22 & 0.947 & 1.184 & 0.0991 & 2.361 & 0.782\\
  \cline{2-11}

& \multirow{6}{*}{U-NET} & \textit{soft-dice} & 76.23 & 96.75 & 86.83 & 0.491 & 1.256 & 0.0589 & 1.120 & 0.881\\ \cdashline{3-11}[2pt/2pt]
&  & $\mathcal{L}_{c}, \alpha = 0.1$  & \textbf{76.66} & \textbf{96.77} & 87.35 & 0.359 & \textbf{0.938} & 0.0457 & 0.980 & 0.878\\
&  & $\mathcal{L}_{c}, \alpha = 0.2$  & 76.25 & 96.76 & 87.29 & \textbf{0.312} & 1.031 & \textbf{0.0415} & 0.865 & 0.900\\
&  & $\mathcal{L}_{c}, \alpha = 0.3$  & 74.85 & 96.57 & 86.10 & 0.322 & 1.062 & 0.0504 & 0.827 & 0.913\\
&  & $\mathcal{L}_{c}, \alpha = 0.4$  & 75.38 & 96.60 & 86.16 & 0.344 & 1.016 & 0.0483 & \textbf{0.755} & \textbf{0.916}\\
&  & $\mathcal{L}_{c}, \alpha = 0.5$  & 76.45 & 96.64 & \textbf{88.17} & 0.375 & 0.953 & 0.0527 & 1.080 & 0.894\\\cline{2-11}
& Mosinska et al. & \cite{mosinska2018beyond,hu2019topology} & - & 97.54 & - & - & 2.781 & - & - & -\\
& Hu et al. & \cite{hu2019topology} & - & 97.28 & - & - & 1.275 & - & - & -\\
\hline\hline
\multirow{9}{*}{CREMI}   
& \multirow{6}{*}{U-NET} & \textit{soft-dice} & 91.54 & 97.11 & 95.86 & 0.259 & 0.657 & 0.0461 & 1.087 & 0.904\\ \cdashline{3-11}[2pt/2pt]
&  & $\mathcal{L}_{c}, \alpha = 0.1$  & 91.76 & \textbf{97.21} & 96.05 & 0.222 & 0.556 & \textbf{0.0395} & 1.000 & 0.900\\
&  & $\mathcal{L}_{c}, \alpha = 0.2$  & 91.66 & 97.15 & 96.01 & 0.231 & 0.630 & 0.0419 & 0.991 & 0.902\\
&  & $\mathcal{L}_{c}, \alpha = 0.3$  & \textbf{91.78} & 97.18 & \textbf{96.21} & \textbf{0.204} & \textbf{0.537} & 0.0437 & \textbf{0.919} & \textbf{0.913}\\
&  & $\mathcal{L}_{c}, \alpha = 0.4$  & 91.56 & 97.12 & 96.09 & 0.250 & 0.630 & 0.0444 & 0.995 & 0.902\\
&  & $\mathcal{L}_{c}, \alpha = 0.5$  & 91.66 & 97.16 & 96.16 & 0.231 & 0.620 & 0.0455 & 0.991 & 0.907\\\cline{2-11}
& Mosinska et al. & \cite{mosinska2018beyond,hu2019topology} & 82.30 & 94.67 & - & - & 1.973 & - & - & -\\
& Hu et al. & \cite{hu2019topology} & - & 94.56 & - & - & 1.113 & - & - & -\\
\hline\hline                              
\multirow{10}{*}{DRIVE retina~}& \multirow{6}{*}{FCN} & \textit{soft-Dice} & 78.23  & 96.27 & 78.02  & 2.187 & 1.860 & 0.0429 & 3.275 & 0.773\\ \cdashline{3-11}[2pt/2pt]
&   & $\mathcal{L}_{c}, \alpha = 0.1$     & 78.36 & 96.25 & 79.02 & 2.100 & 1.610 & 0.0393 & 3.203 & 0.777\\
&   & $\mathcal{L}_{c}, \alpha = 0.2$     & \textbf{78.75} & 96.29 & 80.22 & 1.892 & 1.382 & 0.0383 & 2.895 & 0.793\\
&   & $\mathcal{L}_{c}, \alpha = 0.3$     & 78.29 & 96.20 & 80.28 & 1.888 & \textbf{1.332} & \textbf{0.0318} & 2.918 & \textbf{0.798}\\
&   & $\mathcal{L}_{c}, \alpha = 0.4$     & 78.00 & 96.11 & 80.43 & 2.036 & 1.602 & 0.0423 & 3.141 & 0.764\\
&   & $\mathcal{L}_{c}, \alpha = 0.5$     & 77.76 & 96.04 & \textbf{80.95} & \textbf{1.836} & 1.408 & 0.0394 & \textbf{2.848} & 0.794\\\cline{2-11}
 & \multirow{2}{*}{U-Net} & \textit{soft-Dice} & 74.25 & 95.63 & 75.71 & 1.745 & 1.455 & 0.0649 & 2.997 & 0.760\\
&    & $\mathcal{L}_{c}, \alpha = 0.5$   & \textbf{75.21} & \textbf{95.82} & \textbf{76.86} & \textbf{1.538} & \textbf{1.389} & \textbf{0.0586} & \textbf{2.737} & \textbf{0.767}\\\cline{2-11}
& Mosinska et al. & \cite{mosinska2018beyond,hu2019topology} & - & 95.43 & - & - & 2.784 & - & - & -\\
& Hu et al. & \cite{hu2019topology} & - & 95.21 & - & - & 1.076 & - & - & -\\
\hline\hline

\multirow{16}{*}{Vessap data}    & \multirow{2}{*}{FCN, 1 ch} &  \textit{soft-dice}  & 85.21 & \textbf{96.03} & 90.88 & 3.385 & 4.458 & 0.00459 & 5.850 & 0.862\\ 
  & &$\mathcal{L}_{c}, \alpha = 0.5$  & \textbf{85.44} & 95.91 & \textbf{91.32} & \textbf{2.292} & \textbf{3.677} & \textbf{0.00417} & \textbf{5.620} & \textbf{0.864}\\
  \cline{2-11}
  
& \multirow{6}{*}{FCN, 2 ch} & \textit{soft-dice} & 85.31 & 95.82 & 90.10 & 2.833 & 4.771 & 0.00629 & 6.080 & 0.849\\\cdashline{3-11}[2pt/2pt]
&  & $\mathcal{L}_{c}, \alpha = 0.1$  & 85.96 & 95.99 & 91.02 & 2.896 & \textbf{4.156} & 0.00447 & 5.980 & 0.860\\
&  & $\mathcal{L}_{c}, \alpha = 0.2$  & \textbf{86.45} & \textbf{96.11} & 91.22 & 2.656 & 4.385 & 0.00466 & 5.530 & 0.869\\
&  & $\mathcal{L}_{c}, \alpha = 0.3$  & 85.72 & 95.93 & 91.20 & 2.719 & 4.469 & \textbf{0.00423} & 5.470 & 0.866\\
&  & $\mathcal{L}_{c}, \alpha = 0.4$  & 85.65 & 95.95 & \textbf{91.65} & 2.719 & 4.469 & \textbf{0.00423} & 5.670 & 0.869\\
&  & $\mathcal{L}_{c}, \alpha = 0.5$  & 85.28 & 95.76 & 91.22 & \textbf{2.615} & 4.615 & 0.00433 & \textbf{5.320} & \textbf{0.870}\\
 \cline{2-11}
& \multirow{2}{*}{U-Net, 1 ch} & \textit{soft-dice} & 87.46 & 96.35 & 91.18 & 3.094 & 5.042 & 0.00549 & 5.300 & 0.863\\ 
&  & $\mathcal{L}_{c}, \alpha = 0.5$ & \textbf{87.82} & \textbf{96.52} & \textbf{93.03} & \textbf{2.656} & \textbf{4.615} & \textbf{0.00533} & \textbf{4.910} & \textbf{0.872}\\
    \cline{2-11}
& \multirow{6}{*}{U-Net, 2 ch} & \textit{soft-dice} & 87.98 & 96.56 & 90.16 & 2.344 & 4.323 & 0.00507 & 5.550 & 0.855\\ \cdashline{3-11}[2pt/2pt]
&  & $\mathcal{L}_{c}, \alpha = 0.1$ & 88.13 & 96.59 & 91.12 & 2.302 & 4.490 & 0.00465 & 5.180 & \textbf{0.872}\\
&  & $\mathcal{L}_{c}, \alpha = 0.2$ & 87.96 & 96.74 & 92.52 & 2.208 & \textbf{3.979} & 0.00342 & \textbf{4.830} & 0.861\\
&  & $\mathcal{L}_{c}, \alpha = 0.3$ & 87.70 & 96.71 & 92.56 & \textbf{2.115} & 4.521 & \textbf{0.00309} & 5.260 & 0.858\\
&  & $\mathcal{L}_{c}, \alpha = 0.4$ & \textbf{88.57} & \textbf{96.87} & \textbf{93.25} & 2.281 & 4.302 & 0.00327 & 5.370 & 0.868
\\
&  & $\mathcal{L}_{c}, \alpha = 0.5$ & 88.14 & 96.74 & 92.75 & 2.135 & 4.125 & 0.00328 & 5.390 & 0.864\\
\hline\hline

\end{tabular}
\vspace{-1.5em}
\end{table*}

\subsection{Results and Discussion}
We trained two segmentation architectures, a U-Net and an FCN, for the various loss functions in our experimental setup. As a baseline, we trained the networks using \textit{soft-dice} and compared it with the ones trained using the proposed loss (Eq.~\ref{eq3}), by varying $\alpha$ from (0.1 to 0.5).
\vspace{0.15cm}

\noindent\textbf{Quantitative:} We observe that including \textit{soft-clDice} in any  proportion ($\alpha>0$) leads to improved topological, volumetric and graph similarity for all 2D and 3D datasets, see Table \ref{final_table}. We conclude that $\alpha$ can be interpreted as a hyper parameter which can be tuned \emph{per-dataset}. Intuitively, increasing the  $\alpha$ improves the \textit{clDice} measure for most experiments. Most often, \textit{clDice} is high or highest when the graph and topology based measures are high or highest, particularly the $\beta_1$ Error,  Streetmover distance and  Opt-J F1 score; quantitatively indicating that topological properties are indeed represented in the \textit{clDice} measure.

In spite of not optimizing for a high \textit{soft-clDice} on the background class, all of our networks converge to superior segmentation results. This not only reinforces our assumptions on dataset-specific necessary conditions but also validates the practical applicability of our loss. 
Our findings hold for the different network architectures, for 2D or 3D, and for tubular or curvilinear structures, strongly indicating its generalizability to analogous binary segmentation tasks.\\

Observe that CREMI and the synthetic vessel dataset (see Supplementary material) appear to have the smallest increase in scores over the baseline. We attribute this to them being the least complex datasets in the collection, with CREMI having an almost uniform thickness of radii and the synthetic data having a high signal-to-noise ratio and insignificant illumination variation. 
More importantly, we observe larger improvements for all measures in case of the more complex Vessap and Roads data see Figure \ref{road_result}.
In direct comparison to performance measures reported in two recent publications by Hu et al. and Mosinska et al. \cite{hu2019topology,mosinska2018beyond}, we find that our approach is on par or better in terms of Accuracy and Betti Error for the Roads and CREMI dataset. It is important to note that we used a smaller subset of training data for the Road dataset compared to both while using the same test set. 

Hu et al. reported a Betti error for the DRIVE data, which exceeds ours; however, it is important to consider that their approach explicitly minimizes the mismatch of the persistence diagram, which has significantly higher computational complexity during training, see the section below.
We find that our proposed loss performs superior to the baseline in almost every scenario. The improvement appears to be pronounced when evaluating the highly relevant graph and topology based measures, including the recently introduced OPT-Junction F1 by Citraro et al. \cite{citrarotowards}.  Our results are consistent across different network architectures, indicating that \textit{soft-clDice} can be deployed to any network architecture.

\noindent\textbf{Qualitative: }In Figure \ref{road_result}, typical results for our datasets are depicted. Our networks trained on the proposed loss term recover connections, which were false negatives when trained with the soft-Dice loss. These missed connections appear to be particularly frequent in the complex road and DRIVE dataset. For the CREMI dataset, we observe these situations less frequently, which is in line with the very high quantitative scores on the CREMI data. 
Interestingly, in the real 3D vessel dataset, the soft-Dice loss oversegments vessels, leading to false positive connections. This is not the case when using our proposed loss function, which we attribute to its topology-preserving nature. Additional qualitative results can be inspected in the supplementary.\\

\noindent\textbf{Computational Efficiency: }Naturally, inference times of CNNs with the same architecture but different training losses are identical. However, during training, our soft-skeleton algorithm requires $O(kn^2)$ complexity for an $n\times n$ 2D image where $k$ is the number of iterations. As a comparison, \cite{hu2019topology} needs $O(c^2mlog(m))$ (see \cite{han2003topology}) complexity to compute the 1d persistent homology where $d$ is the number of points with zero gradients in the prediction and $m$ is the number of simplices. Roughly, $c$ is proportional to $n^2$, and $m$ is of $O(n^2)$ for a 2D Euclidean grid. Thus, the worst complexity of \cite{hu2019topology} is $O(n^6log(n))$. 
Additionally, their approach requires an $O(clog(c))$ complexity to find an optimal matching of the birth-death pairs. We note that the total run-time overhead for soft-clDice compared to soft-Dice is marginal, i.e., for batch-size of 4 and 1024x1024 image resolution, the former takes 1.35s while the latter takes 1.24s on average ($<$10\% increase) on an RTX-8000.\\

\noindent\textbf{Future Work: }Although our proposed soft-skeleton approximation works well in practice, a better differentiable skeletonization can only improve performance, which we reserve for future research. Any such skeletonization can be readily plugged into our approach. Furthermore, theoretical and experimental multi-class studies would sensibly extend our study.

\section{Conclusive Remarks}
We introduce \textit{clDice}, a novel topology-preserving similarity measure for tubular structure segmentation. Importantly, we present a theoretical guarantee that \textit{clDice} enforces topology preservation up to homotopy equivalence. Next, we use a differentiable version of the \textit{clDice}, \textit{soft-clDice}, in a loss function, to train state-of-the-art 2D and 3D neural networks. We use \textit{clDice} to benchmark segmentation quality from a topology-preserving perspective along with multiple volumetric, topological, and graph-based measures.  We find that training on \textit{soft-clDice} leads to segmentations with more accurate connectivity information, better graph-similarity, better Euler characteristics, and improved Dice and Accuracy. Our \textit{soft-clDice} is computationally efficient and can be readily deployed to any other deep learning-based segmentation tasks such as neuron segmentation in biomedical imaging, crack detection in industrial quality control, or remote sensing.\\

\noindent \textbf{Acknowledgement: } J. C. Paetzold. and S. Shit. are supported by the GCB and Translatum, TU Munich. S.Shit., A. Zhylka. and I. Ezhov. are supported by TRABIT (EU Grant: 765148). We thank Ali Ertuerk, Mihail I. Todorov, Nils Börner and Giles Tetteh.


\begin{figure}[ht!]

\includegraphics[width=0.98\linewidth]{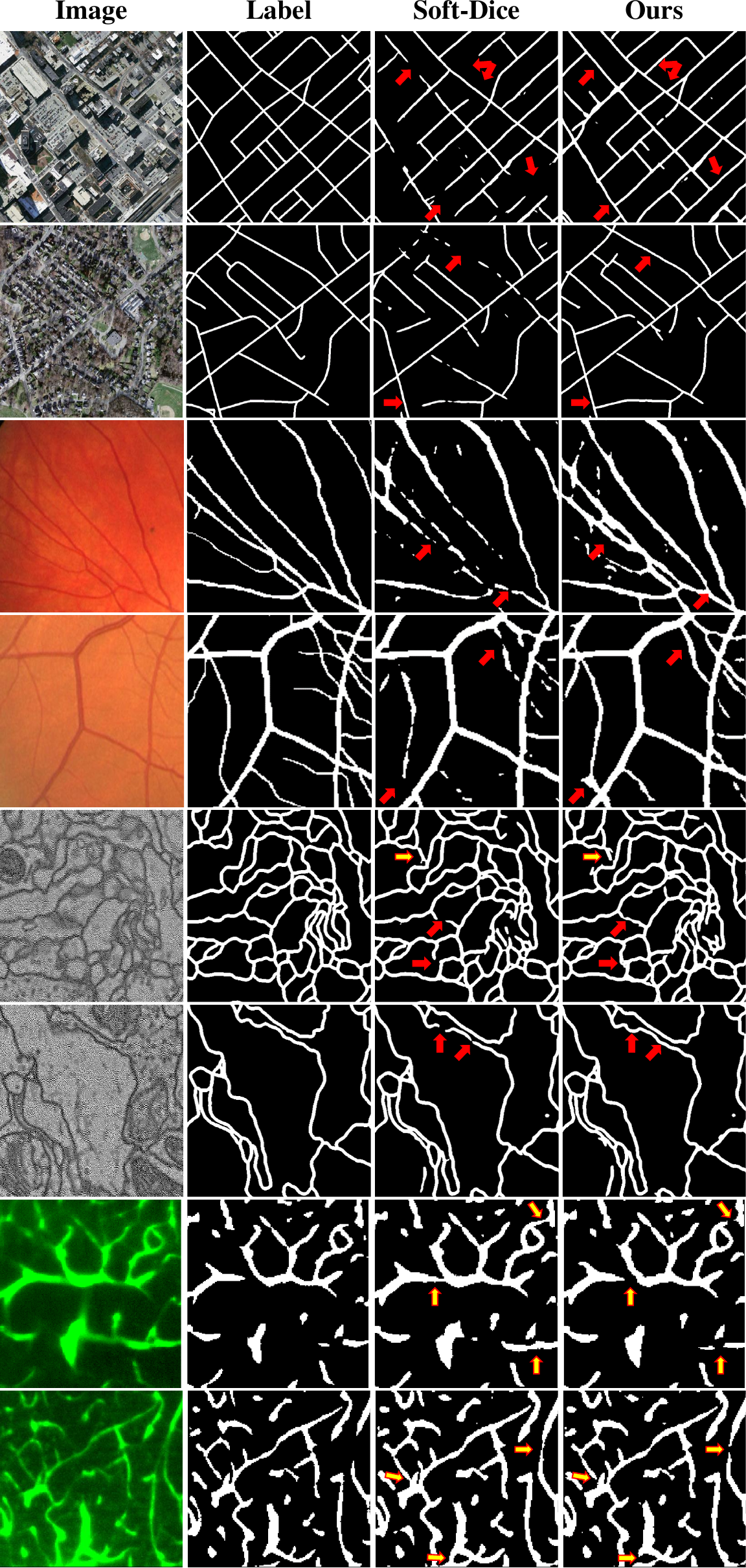}

\caption{Qualitative results: from top to bottom we show two rows of results for: the Massachusetts road dataset, the DRIVE retina dataset, the CREMI neuron data and 2D slices from the 3D Vessap dataset. From left to right, the real image, the label, the prediction using soft-Dice and the U-Net predictions using $\mathcal{L}_c (\alpha=0.5)$ are shown, respectively. The images indicate that \textit{clDice} segments road, retina vessel connections and neuron connections which the soft-Dice loss misses, but also does not segment false-positive vessels in 3D. Some, but not all, missed connections are indicated with solid red arrows, false positives are indicated with red-yellow arrows. More qualitative results can be found in the Supplementary material.}
\label{road_result}
\end{figure}

\clearpage
{\small
\bibliographystyle{ieee_fullname}
\bibliography{main}
}

\clearpage
\appendix
\section{Theory - \textit{clDice} in Digital Topology}
\label{sec:interp}

In addition to our Theorem 1 in the main paper, 
we are providing intuitive interpretations of \textit{clDice} from the digital topology perspective. Betti numbers describe and quantify topological differences in algebraic topology. The first three Betti numbers ($\beta_0$, $\beta_1$, and $\beta_2$) comprehensively capture the manifolds appearing in 2D and 3D topological space. Specifically,
\begin{itemize}[itemsep=-4pt]
    \item $\beta_0$ represents the number of \textit{connected-components},
    \item $\beta_1$ represents the number of \textit{circular holes}
    , and
    \item $\beta_2$ represents the number of \textit{cavities} 
    (Only in 3D)
\end{itemize}{}

\begin{figure}[ht!]
\begin{center}
\includegraphics[width=0.13\textwidth]{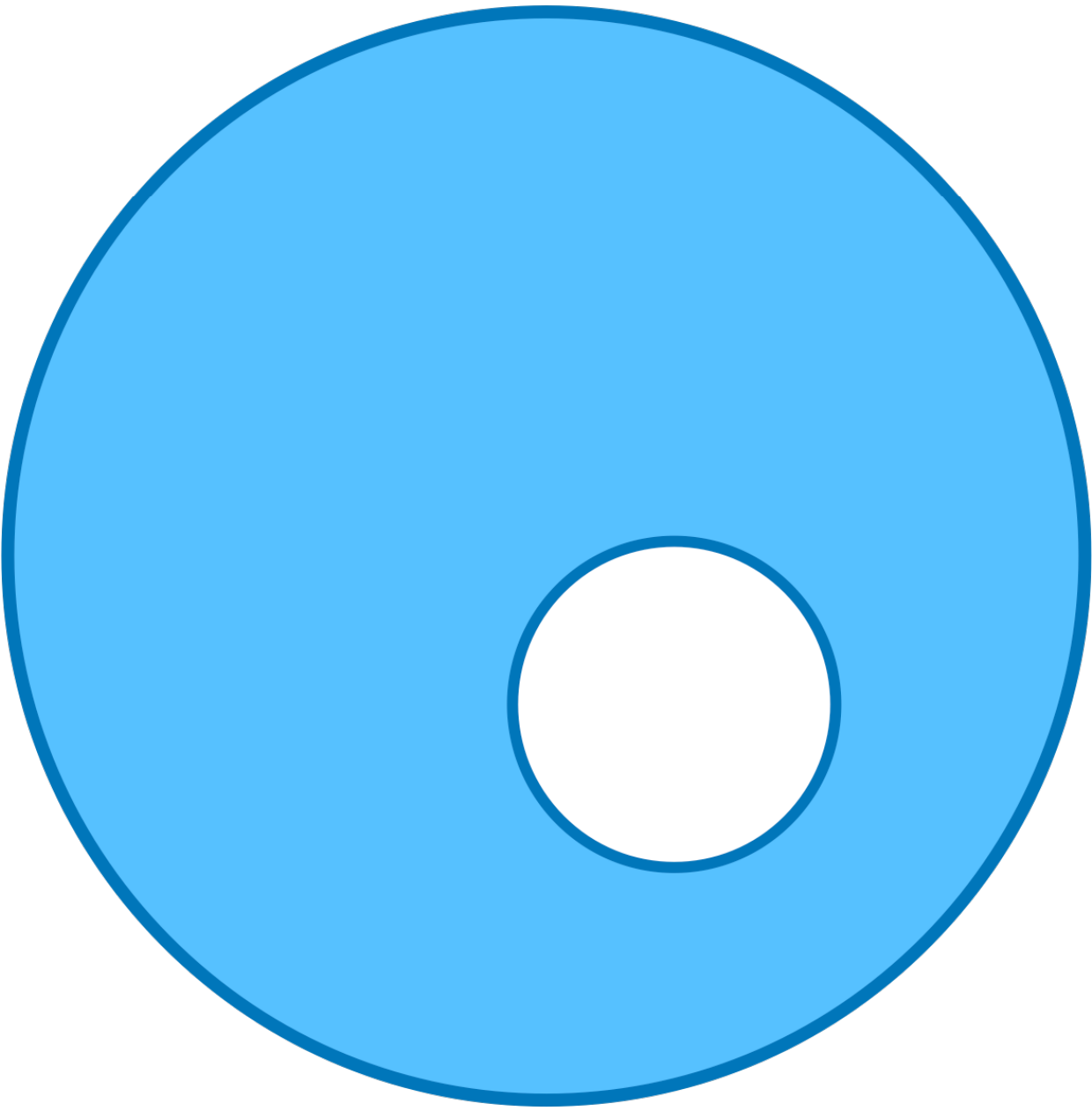}
\includegraphics[width=0.14\textwidth]{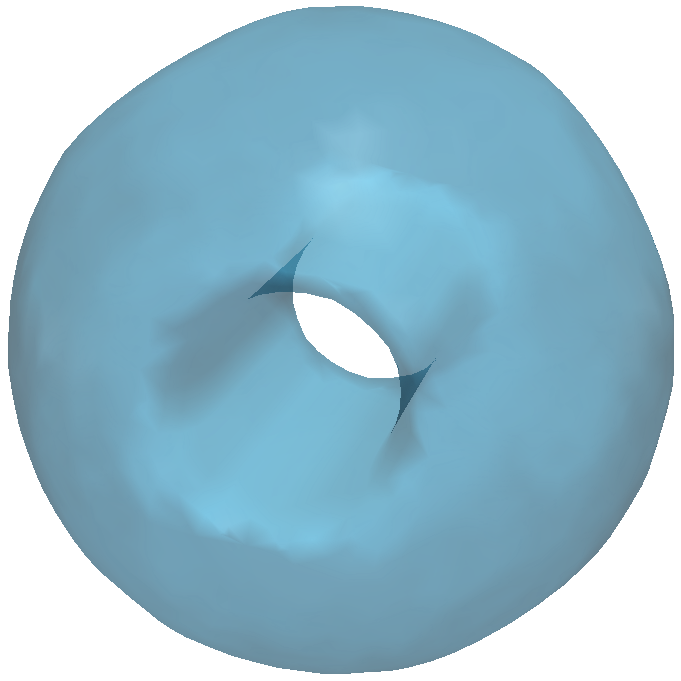}
\includegraphics[width=0.14\textwidth]{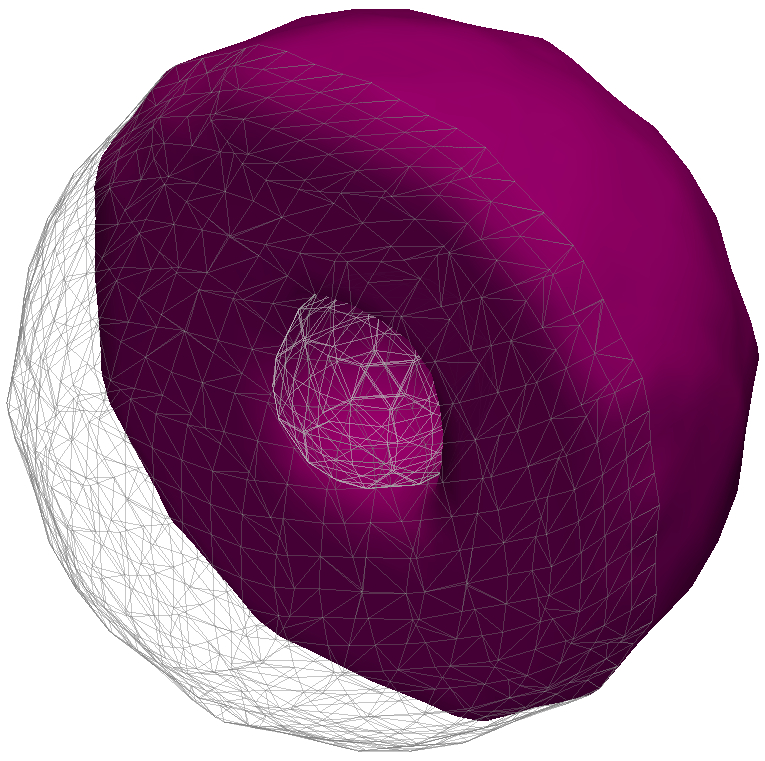} \\
\end{center}
\caption{Examples of the topology properties. Left, a hole in 2D, in the middle a hole in 3D and right a cavity inside a sphere in 3D.} 
\label{fig_top_ex}
\end{figure}

Using the concepts of Betti numbers and digital topology by Kong et al. \cite{kong1995topology,rosenfeld1979digital}, we formulate the effect of topological changes between a true binary mask ($V_L$) and a predicted binary mask ($V_P$) in Fig. \ref{fig_ghost_miss}. We will use the following definition of \textbf{ghosts} and \textbf{misses}, see Figure \ref{fig_ghost_miss}.

\begin{enumerate}[] 
    \item \textbf{Ghosts in skeleton: } We define ghosts in the predicted skeleton ($S_P$) when $S_P \not\subset V_L$. This means the predicted skeleton is not completely included in the true mask. In other words, there exist false-positives in the prediction, which survive after skeletonization.\label{prop1}
    
    \item \textbf{Misses in skeleton: } We define misses in the predicted skeleton ($S_P$) when $S_L \not\subset V_P$. This means the true skeleton is not completely included in the predicted mask. In other words, there are false-negatives in the prediction, which survive after skeletonization.\label{prop2}
\end{enumerate}

\begin{figure*}[ht!]
\begin{center}

\includegraphics[width=0.95\linewidth]{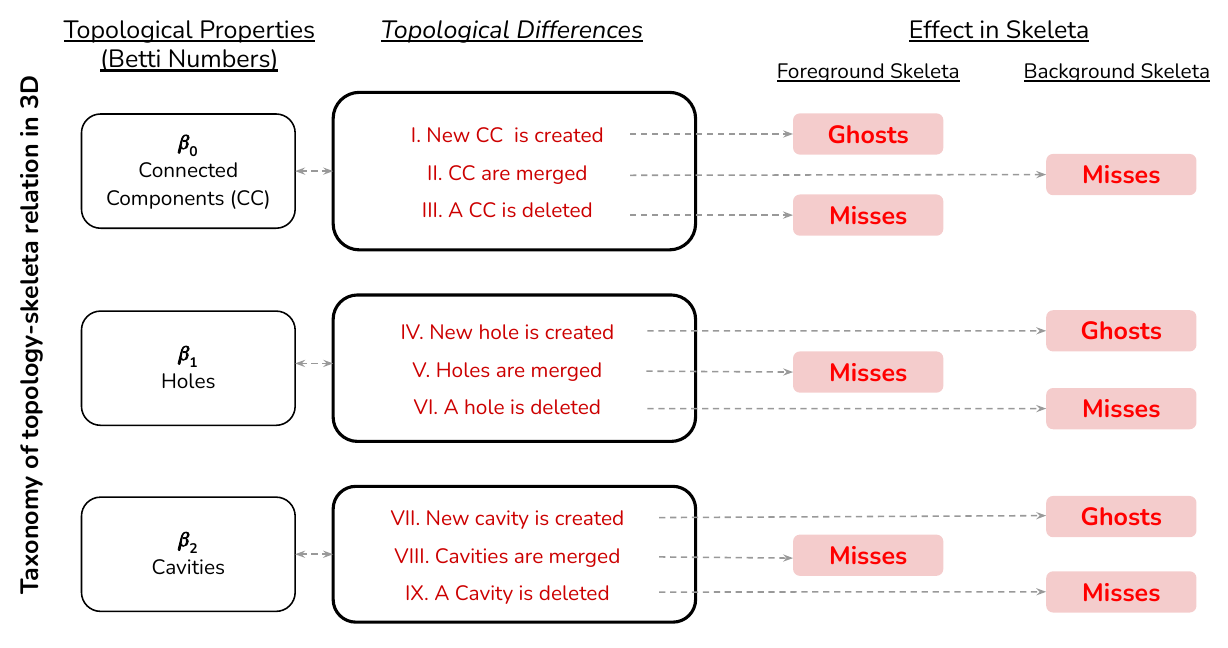}
\label{fig_theory}
\includegraphics[width=0.92\linewidth]{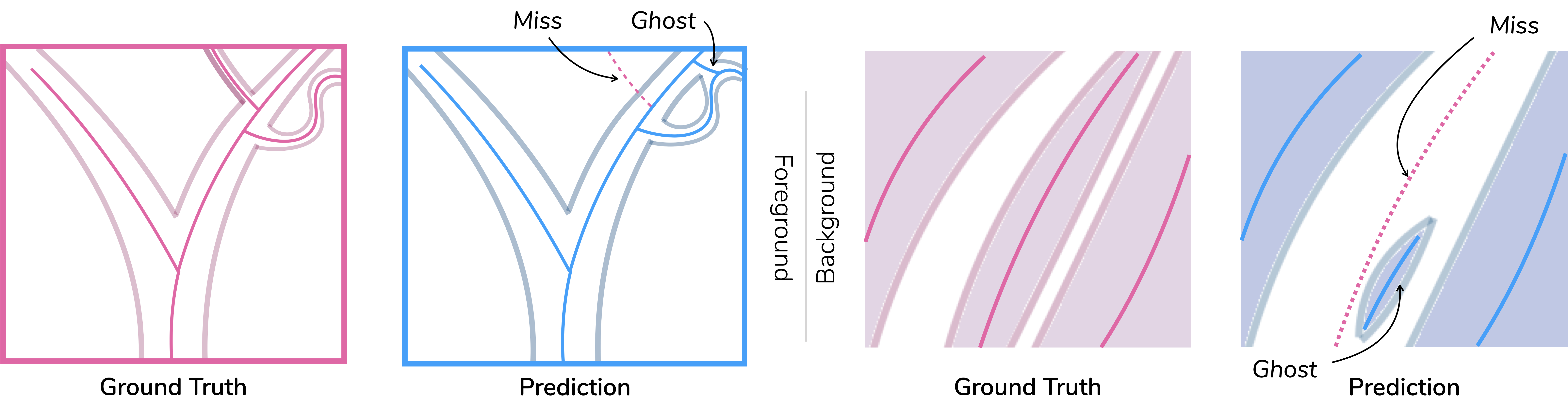}
\end{center}
\caption{Upper part, left, taxonomy of the $iff$ conditions to preserve topology in 3D using the concept of Betti numbers \cite{kong1995topology,kong1989digital}; interpreted as the necessary violation of skeleton properties for any possible topological change in the terminology of ghosts and misses (upper part right) . Lower part, intuitive depictions of ghosts and misses in the prediction; for the skeleton of the foreground (left) and the skeleton of the background (right).}
\label{fig_ghost_miss}
\end{figure*}

The false positives and false negatives are denoted by $V_P\setminus V_L$ and $V_L\setminus V_P$, respectively, where $\setminus$ denotes a set difference operation. The loss function aims to minimize both errors. We call an error correction to happen when the value of a previously false-negative or false-positive voxel flips to a correct value. Commonly used voxel-wise loss functions, such as Dice-loss, treat every false-positive and false-negative equally, irrespective of the improvement in regards to topological differences upon their individual error correction. Thus, they cannot guarantee homotopy equivalence until and unless every single voxel is correctly classified. In stark contrast, we show in the following proposition that \textit{clDice} guarantees homotopy equivalence under \textit{a minimum error correction.}

\begin{proposition}

For any topological differences between $V_P$ and $V_L$, achieving optimal \textit{clDice} to guarantee homotopy equivalence requires a minimum error correction of $V_P$.
\end{proposition}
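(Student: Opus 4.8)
The plan is to reduce the statement to the inclusion characterization of \textit{clDice} already established, and then argue minimality of the resulting voxel flips from the fact that a morphological skeleton is a deformation retract of the solid it is extracted from. First I would fix $V_L$ and regard an \emph{error correction of $V_P$} as a set $F$ of voxels whose values are flipped to their ground-truth value, yielding a new mask $V_P'$, with cost $|F|$. By the remark following the Corollary to Theorem~\ref{thm2}, $\operatorname{clDice}(V_P',V_L)=1$ holds exactly when the four inclusions $S_{P'}\subseteq V_L$, $S_L\subseteq V_{P'}$ and their background analogues hold, i.e. when $V_P'$ has neither ghosts nor misses in the foreground or background skeleton; by that Corollary this already forces $V_P'$ and $V_L$ to be homotopy equivalent on foreground and background. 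So it suffices to show that among all error corrections $F$ achieving $\operatorname{clDice}(V_P',V_L)=1$ there is a smallest one, and that no correction of strictly smaller cost can be \emph{guaranteed} to produce homotopy equivalence.

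Next I would exhibit the candidate minimal correction by splitting the work into misses and ghosts. To repair every foreground miss it is necessary and sufficient to add the false-negative skeleton voxels $S_L\setminus V_P$; since $S_L\subseteq V_L$ these are genuine errors, so the flip is a legitimate error correction, and it is the unique smallest set forcing $S_L\subseteq V_{P'}$. Symmetrically for the background skeleton. To repair the foreground ghosts one deletes a minimal set of false-positive voxels bringing the skeleton of the remaining foreground inside $V_L$ (and symmetrically for the background); such a set exists and consists only of voxels in $V_P\setminus V_L$. Taking $F$ to be the union of these four sets gives an error correction with $\operatorname{clDice}=1$, hence homotopy equivalence, and by construction it touches only false positives and false negatives.

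The core of the argument — and the hard part — is the lower bound: no cheaper error correction can be guaranteed to restore homotopy equivalence. I would argue this feature by feature using the taxonomy of Fig.~\ref{fig_ghost_miss}: every change in $\beta_0$, $\beta_1$ or $\beta_2$ of foreground or background forces at least one violated skeleton inclusion. For a miss, restoring the corresponding homology generator of $V_L$ requires $V_{P'}$ to contain a subset that deformation-retracts onto that generator; the portion of $S_L$ carrying the generator is such a subset, and dropping any of its voxels breaks the retract, so at least $|S_L\setminus V_P|$ restricted to that feature must be flipped — precisely what \textit{clDice} demands. Dually, for a ghost the skeleton of the corrected foreground must avoid the spurious generator, forcing removal of at least a cut set of false positives, which is what $S_{P'}\subseteq V_L$ enforces minimally. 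Assembling these local bounds over all topological differences shows $|F|$ is a lower bound over all homotopy-guaranteeing corrections, so the \textit{clDice}-prescribed correction is minimal. I would close by recording the contrast with volumetric losses: a Dice or cross-entropy value that \emph{guarantees} homotopy equivalence forces correction of every voxel of $V_L\setminus V_P$ and $V_P\setminus V_L$ — the whole solid rather than its centerline — so the \textit{clDice} correction is never larger and is strictly smaller whenever some structure is thicker than its skeleton.

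The main obstacle will be making ``minimum error correction'' and ``guarantee'' precise enough that the lower bound is a theorem rather than a plausibility argument: one must quantify over \emph{all} voxel-flip sets, not just skeleton-respecting ones, and show that any set achieving homotopy equivalence with fewer flips than $F$ must omit part of some deformation retract of a missing generator, or fail to cut a spurious one, contradicting restoration of the homotopy type. This needs careful use of the deformation-retract property of morphological skeleta in the cubical-complex setting together with the digital-topology characterization of $\beta_0,\beta_1,\beta_2$ changes; the remainder is bookkeeping.
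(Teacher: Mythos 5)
Your constructive half follows the paper's own route: decompose any topological difference into ghosts and misses of the foreground and background skeleta (Fig.~\ref{fig_ghost_miss}), repair each miss by adding the false-negative skeleton voxels and each ghost by deleting a minimal set of false positives, then invoke Lemma~\ref{obs1} and Theorem~\ref{thm2} to get $\operatorname{clDice}=1$ and hence homotopy equivalence. That is essentially all the paper's proof contains; its ``minimality'' is by construction (the sets $E1_{\min}$ and $E2_{\min}$ are simply \emph{defined} as the smallest flip sets removing a given ghost or miss without creating new ones), and the intended contrast is with Dice or cross-entropy, which can only guarantee homotopy equivalence once every voxel of the symmetric difference is corrected.

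The genuine gap is in the lower bound you single out as the core of the argument, and it is not just bookkeeping: the claim is false as stated. To restore a missing homology generator of $V_L$ in $V_{P}'$ you do not need $V_{P}'$ to contain the portion of $S_L$ carrying that generator; any cycle in $V_L$ homologous to it suffices. Concretely, if a thick tube of $V_L$ is captured by $V_P$ except for a one-voxel gap, flipping that single voxel restores the loop and the homotopy type, while $S_L\subseteq V_{P}'$ may still fail because the centerline of $V_L$ runs through voxels $V_{P}'$ never acquires. Hence a flip set achieving homotopy equivalence can be strictly cheaper than any flip set achieving $\operatorname{clDice}=1$, so $|F|$ is \emph{not} a lower bound over all homotopy-guaranteeing corrections, and the dual argument for ghosts has the same defect. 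The proposition only survives under the weaker reading the paper actually uses --- minimality relative to criteria that certify the equivalence via skeleton inclusions, as opposed to full voxelwise correctness --- not as a sharp combinatorial optimum over arbitrary voxel flips. To make your stronger form true you would have to restrict the admissible corrections to those that also certify topology preservation through Theorem~\ref{thm2}, or weaken ``minimum'' to ``no more than a deformation retract of the solid.''
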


\begin{proof}
From Fig \ref{fig_ghost_miss}, any topological differences between $V_P$ and $V_L$ will result in ghosts or misses in the foreground or background skeleton. Therefore, removing ghosts and misses are sufficient conditions to remove topological differences. Without the loss of generalizability, we consider the case of ghosts and misses separately:\\

For a \textbf{ghost} $g \subset S_P, \exists \mbox{ a set of predicted voxels } E1 \subset \{V_P \setminus V_L\}\mbox{ such that } V_P \setminus E
1$ does not create any misses and removes $g$. Without the loss of generalizability, let's assume that there is only one ghost $g$. Now, to remove $g$, under a minimum error correction of $V_P$, we have to minimize $|E1|$. Let's say an optimum solution $E1_{min}$ exists. By construction, this implies that $V_P\setminus E1_{min}$ removes $g$.

For a \textbf{miss} $m \subset V_P^\complement, \exists \mbox{ a set of predicted voxels } E2 \subset \{V_L \setminus V_P\}\mbox{ such that } V_P \cup E2$ does not create any ghosts and removes $m$. Without the loss of generalizability, let's assume that there is only one miss $m$. Now, to remove $m$, under a minimum error correction of $V_P$, we have to minimize $|E2|$. Let's say an optimum solution $E2_{min}$ exists. By construction, this implies that $V_P\cup E2_{min}$ removes $m$.\\

Thus, in the absence of any ghosts and misses, from Lemma \ref{obs1}, \textit{clDice}=1 for both foreground and background. Finally, Therefore, Theorem 1 (from the main paper) guarantees homotopy equivalence.
\end{proof}

\begin{restatable}[]{lemma}{obsone}
\label{obs1}
In the absence of any ghosts and misses \textit{clDice}=1.
\end{restatable}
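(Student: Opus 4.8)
The plan is to unwind the definitions of \textbf{ghosts} and \textbf{misses} and observe that their absence is exactly a pair of set inclusions, after which the claim follows by a one-line computation with the harmonic mean. First I would recall that, by definition, a ghost in the predicted skeleton means $S_P \not\subset V_L$, so the \emph{absence of ghosts} is precisely the inclusion $S_P \subseteq V_L$; likewise, a miss means $S_L \not\subset V_P$, so the \emph{absence of misses} is precisely $S_L \subseteq V_P$. I would state this for the foreground channel and remark that the identical reasoning applies verbatim to the background channel, since ``ghosts and misses'' in the statement ranges over both the foreground and background skeleta (cf. Fig.~\ref{fig_ghost_miss}).

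Next I would substitute these inclusions into the definitions \eqref{top_def}. From $S_P \subseteq V_L$ we get $S_P \cap V_L = S_P$, hence $\operatorname{Tprec}(S_P, V_L) = |S_P|/|S_P| = 1$; similarly $S_L \subseteq V_P$ gives $S_L \cap V_P = S_L$, hence $\operatorname{Tsens}(S_L, V_P) = |S_L|/|S_L| = 1$. Feeding $\operatorname{Tprec} = \operatorname{Tsens} = 1$ into \eqref{eq2} yields $\operatorname{clDice}(V_P, V_L) = 2 \cdot \tfrac{1 \cdot 1}{1 + 1} = 1$. Applying the same substitution to the background masks gives background $\operatorname{clDice} = 1$ as well, which is exactly the hypothesis used at the end of the proof of the Proposition.

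The only points needing any care — and the closest thing to an obstacle — are bookkeeping rather than mathematics: one should note that the skeleta $S_P$ and $S_L$ are nonempty so that the ratios $\operatorname{Tprec}$ and $\operatorname{Tsens}$ are well-defined (an empty skeleton would correspond to an empty mask, a degenerate case outside our setting), and one should make explicit that the statement is invoked once per channel. No topology enters here; the lemma is purely the elementary observation that for nested sets the intersection equals the smaller one, so both topology ratios saturate to $1$ and their harmonic mean is $1$.
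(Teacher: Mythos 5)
Your proof is correct and follows essentially the same route as the paper's: absence of ghosts gives $S_P \subseteq V_L$ hence $\operatorname{Tprec}=1$, absence of misses gives $S_L \subseteq V_P$ hence $\operatorname{Tsens}=1$, and the harmonic mean of two ones is one. Your added remarks on nonempty skeleta and on applying the argument per channel are sensible bookkeeping that the paper leaves implicit.
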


\begin{proof}
The absence of any ghosts $S_P \in V_L$ implies $Tprec=1$; and the absence of any misses $S_L \in V_P$ implies $Tsens=1$. Hence, \textit{clDice}=1.
\end{proof}

\subsection{Interpretation of the Adaption to Highly Unbalanced Data According to Digital Topology:}
Considering the adaptions we described in the main text, the following provides analysis on how these assumptions and adaptions are funded in the concept of ghosts and misses, described in the previous proofs. Importantly, the described adaptions are not detrimental to the performance of \textit{clDice} for our datasets.
We attribute this to the non-applicability of the necessary conditions specific to the background (i.e. II, IV, VI, VII, and IX in Figure \ref{fig_theory}), as explained below: 

\begin{itemize}[,itemsep=-2pt]
    \item II. $\rightarrow$ In tubular structures, all foreground objects are eccentric (or anisotropic). Therefore isotropic skeletonization will highly likely produce a ghost in the foreground.
    \item IV. $\rightarrow$ Creating a hole outside the labeled mask means adding a ghost in the foreground. Creating a hole inside the labeled mask is extremely unlikely because no such holes exist in our training data.
    \item VI. $\rightarrow$ The deletion of a hole without creating a miss is extremely unlikely because of the sparsity of the data.
    \item VII.and IX. (only for 3D) $\rightarrow$ Creating or removing a cavity is very unlikely because no cavities exist in our training data.
\end{itemize}

\section{Additional Qualitative Results} 
\begin{figure*}[]
\label{supp_road}
\centering

\includegraphics[width=0.78\linewidth]{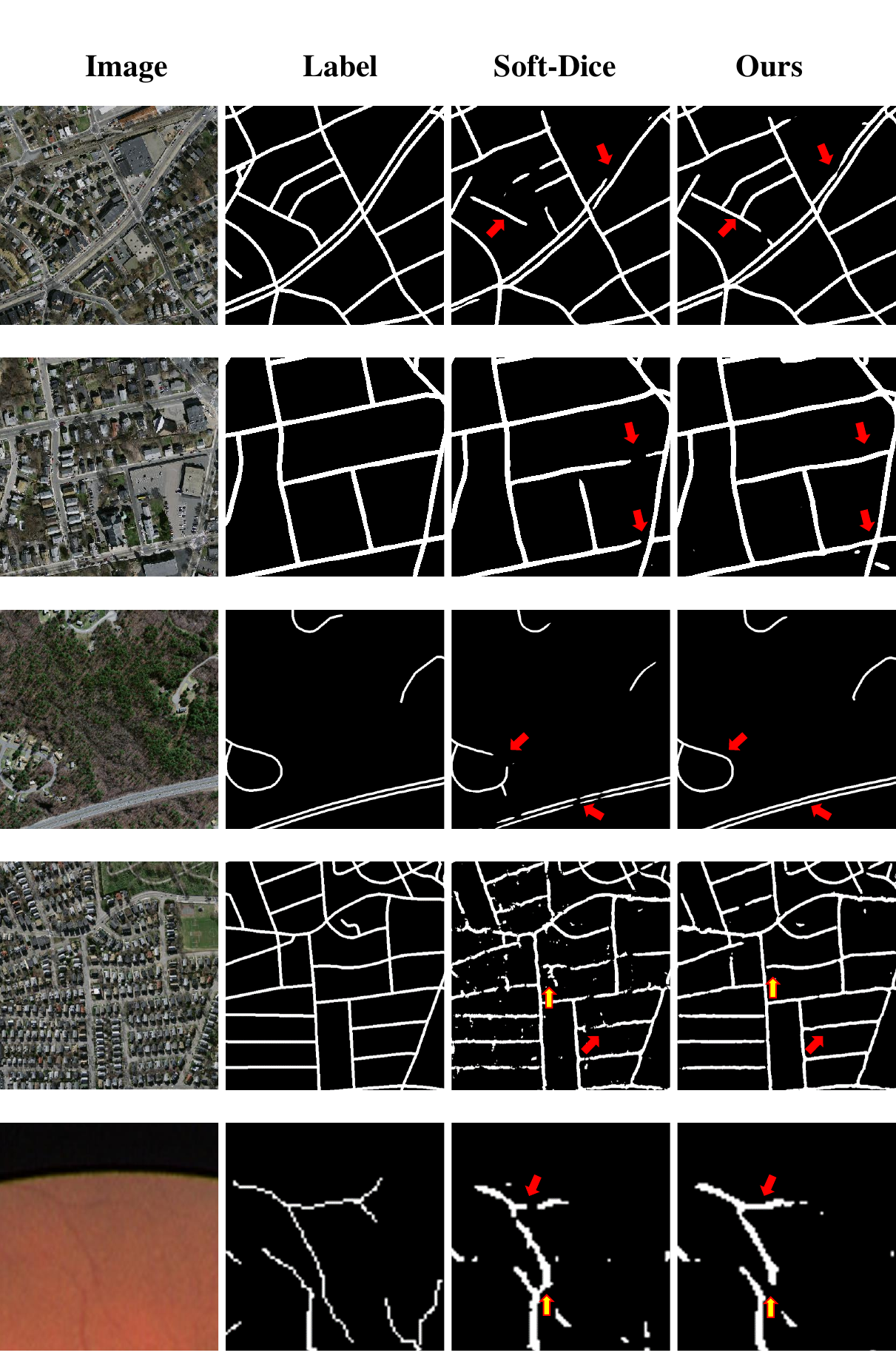}

\caption{\footnotesize Qualitative results:  for the Massachusetts Road dataset and  for the DRIVE retina dataset (last row). From left to right, the real image, the label, the prediction using soft-dice and the predictions using the proposed $\mathcal{L}_c (\alpha=0.5)$, respectively. The first three rows are U-Net results and the fourth row is an FCN result. This indicates that \textit{soft-clDice} segments road connections which the soft-dice loss misses. Some,  but  not  all,  missed  connections  are  indicated with solid red arrows, false positives are indicated with red-yellow arrows.}
\end{figure*}

\begin{figure*}[]
\label{supp_3d}
\centering

\includegraphics[width=0.78\linewidth]{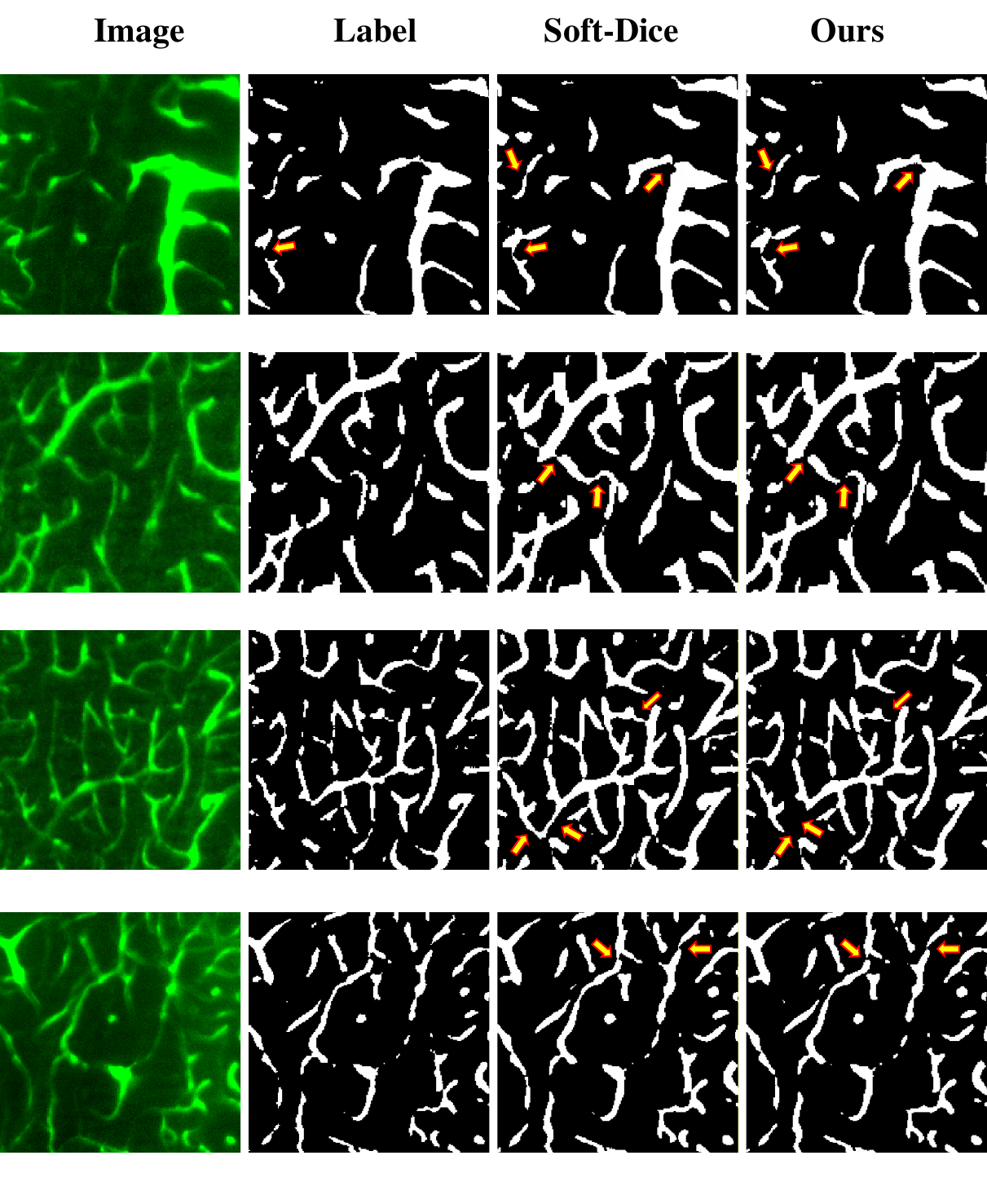}
\caption{\footnotesize Qualitative results: 2D slices of the 3D vessel dataset for different sized field of views. From left to right, the real image, the label, the prediction using soft-dice and the U-Net predictions using $\mathcal{L}_c (\alpha=0.4)$, respectively. These images show that \textit{soft-clDice} helps to better segment the vessel connections. Importantly the networks trained using soft-dice over-segment the vessel radius and segments incorrect connections. Both of these errors are not present when we train including \textit{soft-clDice} in the loss. Some,  but  not  all, false positive connections are indicated with red-yellow arrows.
}
\end{figure*}

\section{Comparison to Other Literature:} 

A recent pre-print proposed a region-separation approach, which aims to tackle the issue by analysing disconnected foreground elements \cite{oner2020promoting}. Starting with the predicted distance map, a network learns to close ambiguous gaps by referring to a ground truth map which is dilated by a five-pixel kernel, which is used to cover the ambiguity. However, this does not generalize to scenarios with a close or highly varying proximity of the foreground elements (as is the case for e.g. capillary vessels, synaptic gaps or irregular road intersections). Any two foreground objects which are placed at a twice-of-kernel-size distance or closer to each other will potentially be connected by the trained network. This is facilitated by the loss function considering the gap as a foreground due to performing dilation in the training stage. Generalizing their approach to smaller kernels has been described as infeasible in their paper \cite{oner2020promoting}.

\section{Datasets and Training Routine}

For the DRIVE vessel segmentation dataset, we perform three-fold cross-validation with 30 images and deploy the best performing model on the test set with 10 images. For the Massachusetts Roads dataset, we choose a subset of  120 images (ignoring imaged without a network of roads) for three-fold cross-validation and test the models on the 13 official test images.  For CREMI, we perform three-fold cross-validation on 324 images and test on 51 images. For the 3D synthetic dataset. we perform experiments using 15 volumes for training, 2 for validation, and 5 for testing.
For the Vessap dataset, we use 11 volumes for training, 2 for validation and 4 for testing. In each of these cases, we report the performance of the model with the highest clDice score on the validation set.

\section{Network Architectures} 
We use the following notation: $In(input~channels)$, $Out(output~channels)$, \\ $B(output~channels)$ present input, output, and bottleneck information(for U-Net); $C(filter~size, output~channels)$ denote a convolutional layer followed by $ReLU$ and batch-normalization; $U(filter~size, output~channels)$ denote a trans-posed convolutional layer followed by $ReLU$ and batch-normalization; $\downarrow 2$ denotes maxpooling; $\oplus$ indicates concatenation of information from an encoder block. We had to choose a different FCN architecture for the Massachusetts road dataset because we realize that a larger model is needed to learn useful features for this complex task.

\subsection{Drive Dataset}
\subsubsection{FCN :}
$IN(\mbox{3 ch})\rightarrow C(3,5)\rightarrow C(5,10) \rightarrow C(5,20)\rightarrow C(3,50)\rightarrow C(1,1)\rightarrow Out(1)$
\subsubsection{Unet :}
\paragraph{ConvBlock :} $C_B(3, out~size)\equiv C(3,out~size)\rightarrow C(3,out~size)\rightarrow \downarrow 2$
\paragraph{UpConvBlock:} $U_B(3, out~size)\equiv U(3,out~size)\rightarrow \oplus\rightarrow C(3,out~size)$
\paragraph{Encoder :}
$IN(\mbox{3 ch})\rightarrow C_B(3,64)\rightarrow C_B(3,128) \rightarrow C_B(3,256)\rightarrow C_B(3,512)\rightarrow C_B(3,1024)\rightarrow B(1024)$
\paragraph{Decoder :}
$B(1024)\rightarrow U_B(3,1024)\rightarrow U_B(3,512)\rightarrow U_B(3,256) \rightarrow U_B(3,128)\rightarrow U_B(3,64)\rightarrow Out(1)$

\subsection{Road Dataset}
\subsubsection{FCN :}
$IN(\mbox{3 ch})\rightarrow C(3,10)\rightarrow C(5,20)\rightarrow C(7,30)\rightarrow C(11,30)\rightarrow C(7,40) \rightarrow C(5,50)\rightarrow C(3,60)\rightarrow C(1,1)\rightarrow Out(1)$
\subsubsection{Unet :}
Same as Drive Dataset, except we used 2x2 up-convolutions instead of bilinear up-sampling followed by a 2D-convolution with kernel size 1.

\subsection{Cremi Dataset}
\subsubsection{Unet :}
Same as Road Dataset. 

\subsection{3D Dataset }
\subsubsection{3D FCN :}
$IN(\mbox{1 or 2 ch})\rightarrow C(3,5)\rightarrow C(5,10) \rightarrow C(5,20)\rightarrow C(3,50)\rightarrow C(1,1)\rightarrow Out(1)$
\subsubsection{3D Unet :}
\paragraph{ConvBlock :} $C_B(3, out~size)\equiv C(3,out~size)\rightarrow C(3,out~size)\rightarrow \downarrow 2$
\paragraph{UpConvBlock:} $U_B(3, out~size)\equiv U(3,out~size)\rightarrow \oplus\rightarrow C(3,out~size)$
\paragraph{Encoder :}
$IN(\mbox{1 or 2 ch})\rightarrow C_B(3,32)\rightarrow C_B(3,64) \rightarrow C_B(3,128)\rightarrow C_B(5,256)\rightarrow C_B(5,512)\rightarrow B(512)$
\paragraph{Decoder :}
$B(512)\rightarrow U_B(3,512) \rightarrow U_B(3,256)\rightarrow U_B(3,128) \rightarrow U_B(3,64)\rightarrow U_B(3,32)\rightarrow Out(1)$

\begin{table}[ht!]
\centering
\caption{Total number of parameters for each of the architectures used in our experiment.}
\begin{tabular}{ccc}
\hline
Dataset & Network & Number of parameters \\
\hline
\hline
Drive   & FCN     & 15.52K               \\
        & UNet    & 28.94M               \\
        \hline
Road    & FCN      & 279.67K              \\
\hline
Cremi    & UNet      & 31.03M              \\
\hline
3D      & FCN 2ch    & 58.66K               \\
        & Unet 2ch    & 19.21M             \\
\hline
\end{tabular}
\end{table}

\section{Soft Skeletonization Algorithm}

\begin{figure}[ht!]
\centering
\includegraphics[width=1\linewidth]{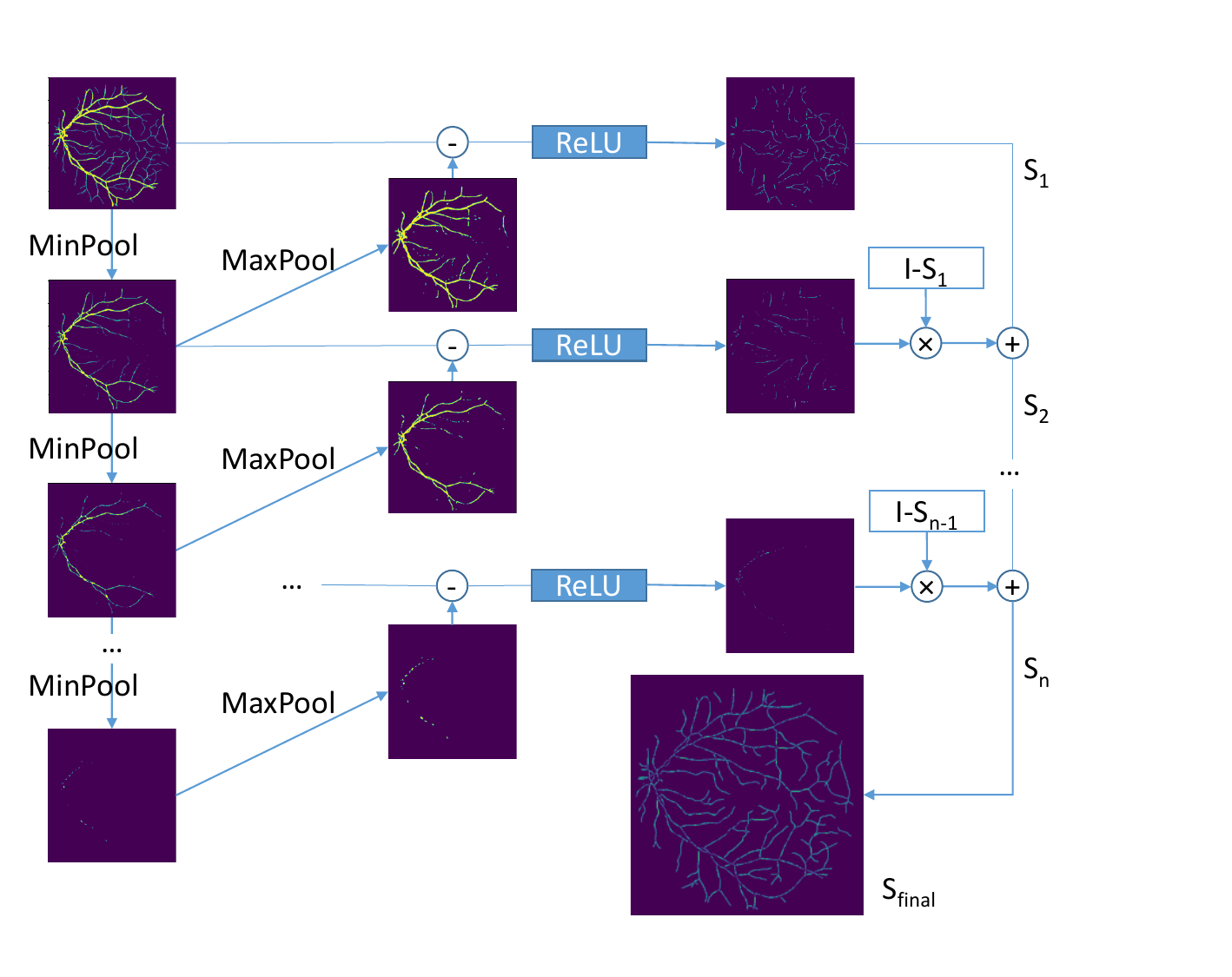}

\caption{Scheme of our proposed differentiable skeletonization. On the top left the mask input is fed. Next, the input is reatedly eroded and dilated. The resulting erosions and dilations are compared to the image before dilation. The difference between thise images is part of the skeleton and will be added iteratively to obtain a full skeletonization. The ReLu operation eliminates pixels that were generated by the dilation but are not part of the oirginal or eroded image.}
\label{example_skel}
\end{figure}

\section{Code for the \textit{clDice} similarity measure and the \textit{soft-clDice} loss (PyTorch):}

\subsection{\textit{clDice} measure}
\footnotesize
\begin{lstlisting}[language=Python]
from skimage.morphology import skeletonize
import numpy as np
def cl_score(v, s):
    return np.sum(v*s)/np.sum(s)
def clDice(v_p, v_l):
    tprec = cl_score(v_p,skeletonize(v_l))
    tsens = cl_score(v_l,skeletonize(v_p))
    return 2*tprec*tsens/(tprec+tsens)
\end{lstlisting}

\subsection{\textit{soft-skeletonization} in 2D}
\begin{lstlisting}[language=Python]
import torch.nn.functional as F
def soft_erode(img):
    p1 = -F.max_pool2d(-img, (3,1), (1,1), (1,0))
    p2 = -F.max_pool2d(-img, (1,3), (1,1), (0,1))
    return torch.min(p1,p2)

def soft_dilate(img):
    return F.max_pool2d(img, (3,3), (1,1), (1,1))

def soft_open(img):
    return soft_dilate(soft_erode(img))
    
def soft_skel(img, iter):
    img1 = soft_open(img)
    skel = F.relu(img-img1)
    for j in range(iter):
        img = soft_erode(img)
        img1 = soft_open(img)
        delta = F.relu(img-img1)
        skel = skel + F.relu(delta-skel*delta)
    return skel
\end{lstlisting}

\subsection{\textit{soft-skeletonization} in 3D}
\begin{lstlisting}[language=Python]
import torch.nn.functional as F

def soft_erode(img):
    p1 = -F.max_pool3d(-img,(3,1,1),(1,1,1),(1,0,0))
    p2 = -F.max_pool3d(-img,(1,3,1),(1,1,1),(0,1,0))
    p3 = -F.max_pool3d(-img,(1,1,3),(1,1,1),(0,0,1))

    return torch.min(torch.min(p1, p2), p3)

def soft_dilate(img):
    return F.max_pool3d(img,(3,3,3),(1,1,1),(1,1,1))

def soft_open(img):
    return soft_dilate(soft_erode(img))

def soft_skel(img, iter_):
    img1  =  soft_open(img)
    skel  =  F.relu(img-img1)
    for j in range(iter_):
        img  =  soft_erode(img)
        img1  =  soft_open(img)
        delta  =  F.relu(img-img1)
        skel  =  skel +  F.relu(delta-skel*delta)
    return skel

\end{lstlisting}
\normalsize

\section{Evaluation Metrics}
\noindent As discused in the text, we compare the performance of various experimental setups using three types of metrics: volumetric, graph-based and topology-based.

\subsection{ Overlap-based:}
Dice coefficient, Accuracy and \textit{clDice}, we calculate these scores on the whole 2D/3D volumes. \textit{clDice} is calculated using a morphological skeleton (skeletonize3D from the skimage library). 

\subsection{Graph-based:} We extract graphs 
from random patches of $64\times64$ pixels in 2D and $48\times48\times48$ in 3D images.  

For the StreetmoverDistance (SMD) \cite{belli2019image} we uniformly sample a fixed number of points from the graph of the prediction and label, match them and calculate the Wasserstein-distance between these graphs. 
For the junction-based metric (Opt-J) we compute the F1 score of junction-based metrics, recently proposed by \cite{citrarotowards}. According to their paper this metric is advantageous over all previous junction-based metrics as it can account for nodes with an arbitrary number of incident edges, making this metric more sensitive to endpoints and missed connections in predicted networks. For more information please refor to their paper.

\subsection{Topology-based:} 
For topology-based scores we calculate the Betti Errors for the Betti Numbers $\beta_0$ and $\beta_1$. Also, we calculate the Euler characteristic, $\chi = V-E+F$, where $E$ is the number of edges, $F$ is the number of faces and $V$ is the number of vertices. We report the relative Euler characteristic error ($\chi_{ratio}$), as the ratio of the $\chi$ of the predicted mask and that of the ground truth. Note that a $\chi_{ratio}$ closer to one is preferred. All three topology-based scores are calculated on random patches of $64\times64$ pixels in 2D and $48\times48\times48$ in 3D images.

\section{Additional Quantitative Results}

\begin{table}[ht!]

\caption{ Quantitative experimental results for the 3D synthetic vessel dataset. Bold numbers indicate the best performance. We trained baseline models of binary-cross-entropy (BCE), softDice and mean-squared-error loss (MSE) and combined them with our \textit{soft-clDice} and varied the $\alpha > 0$. For all experiments we observe that using \textit{soft-clDice} in $\mathcal{L}_{c}$ results in improved scores  compared to \emph{soft-Dice}. This improvement holds for almost $\alpha > 0$. We observe that \textit{soft-clDice} can be efficiently combined with all three frequently used loss functions.}

\centering
\label{synth_data_table}

\begin{tabular}{|p{2.2cm}|p{1.4cm}|p{1.4cm}|}
        \hline
        Loss&Dice&clDice\\
        \hline
        BCE&99.81&98.24\\
        \hdashline
        $L_c$, $\alpha$ = 0.5&99.76&98.25\\
        $L_c$, $\alpha$ = 0.4&99.77&98.29\\
        $L_c$, $\alpha$ = 0.3&99.76&98.20\\
        $L_c$, $\alpha$ = 0.2&99.78&98.29\\
        $L_c$, $\alpha$ = 0.1&99.82&98.39\\
        $L_c$, $\alpha$ = 0.01&99.83&98.46\\
        $L_c$, $\alpha$ = 0.001&\textbf{99.85}&\textbf{98.42}\\
        \hline
        soft-Dice&99.74&97.07\\
  		\hdashline
        $L_c$, $\alpha$ = 0.5&99.74&97.53\\
        $L_c$, $\alpha$ = 0.4&99.74&97.07\\
        $L_c$, $\alpha$ = 0.3&\textbf{99.80}&\textbf{98.13}\\
        $L_c$, $\alpha$ = 0.2&99.74&97.08\\
        $L_c$, $\alpha$ = 0.1&99.74&97.08\\
        $L_c$, $\alpha$ = 0.01&99.74&97.07\\
        $L_c$, $\alpha$ = 0.001&99.74&97.12\\
        \hline
        MSE&99.71&97.03\\
  		\hdashline
        $L_{c}$, $\alpha$ = 0.5&99.62&98.22\\
        $L_c$, $\alpha$ = 0.4&99.65&97.04\\
        $L_c$, $\alpha$ = 0.3&99.67&98.16\\
        $L_c$, $\alpha$ = 0.2&99.70&97.10\\
        $L_c$, $\alpha$ = 0.1&99.74&98.21\\
        $L_c$, $\alpha$ = 0.01&99.82&98.32\\
        $L_c$, $\alpha$ = 0.001&\textbf{99.84}&\textbf{98.37}\\
   \hline \hline
\end{tabular}

\end{table}

\end{document}